\newcommand{\field}[1]{\ensuremath{\mathbb{#1}}}
\newcommand{\R}{\ensuremath{\field{R}}} 
\newcommand{\I}[1]{\ensuremath{\mathbb{I}_{\left\{#1\right\}}}} 
\newcommand{\PR}{\ensuremath{\mathbb{P}}} 
\newcommand{\E}{\ensuremath{\mathbb{E}}} 
\newcommand{\defeq}{\ensuremath{\triangleq}}
\newcommand{\Ascr}{\ensuremath{\mathcal A}}
\newcommand{\Escr}{\ensuremath{\mathcal E}}
\newcommand{\Gscr}{\ensuremath{\mathcal G}}
\newcommand{\Hscr}{\ensuremath{\mathcal H}}
\newcommand{\Iscr}{\ensuremath{\mathcal I}}
\newcommand{\Nscr}{\ensuremath{\mathcal N}}
\newcommand{\Pscr}{\ensuremath{\mathcal P}}
\newcommand{\Rscr}{\ensuremath{\mathcal R}}
\newcommand{\Yscr}{\ensuremath{\mathcal Y}}
\DeclareMathOperator{\Cov}{Cov}
\DeclareMathOperator*{\argmax}{\mathrm{argmax}}
\declaretheoremstyle[headfont=\sffamily\bfseries,bodyfont=\itshape]{thm-sf}
\declaretheorem[style=thm-sf]{theorem}
\declaretheorem[style=thm-sf]{definition}
\declaretheorem[style=thm-sf]{example}
\declaretheorem[style=thm-sf]{corollary}
\renewcommand{\thmcontinues}[1]{\hyperref[#1]{continued}}
\tikzstyle{every picture} += [>=stealth]
\tikzset{axis/.style={semithick, line join=miter}}
\def\@seccntformat#1{\csname the#1\endcsname.\quad}
\newcommand{\emailhref}[1]{\href{mailto:#1}{\tt #1}} 
\newcommand{\hidefastcompile}[1]{\ifthenelse{\boolean{fastcompile}}{}{#1}}
\newcommand{\todo}[1]{{\color{red} \noindent {\sffamily\bfseries TODO:} #1}}
\definecolor{orange}{rgb}{0.85,0.33,0.13} 
\definecolor{green}{rgb}{0.13,0.85,0.33}
\definecolor{purple}{rgb}{0.33,0.13,0.85}
\definecolor{lime}{rgb}{0.65,0.85,0.13}
\definecolor{blue}{rgb}{0.13,0.65,0.85}
\pgfplotsset{colormap={tricolormap}{color=(orange) color=(green) color=(purple)},
  colormap={quadcolormap}{color=(orange) color=(lime) color=(blue) color=(purple)}}
\renewcommand*{\thead}[1]{\bfseries \makecell{#1}}
  \renewcommand{\todo}[1]{}
  \newcommand{\deledit}[1]{}
  \newcommand{\deledit}[1]{{\color{orange} \sout{#1}}}
\tikzstyle{rate} += [color=orange,very thick]
\pgfplotsset{compat=newest}
\newcommand{\seclabel}[1]{\noindent\textbf{\sffamily #1}}
  \title{\textsf{\textbf{Policy Gradient Optimization of\\ Thompson Sampling Policies}}}
  \author{}
  \date{}
  \title{\textsf{\textbf{Policy Gradient Optimization of\\ Thompson Sampling Policies}}%
    \thanks{The second and third authors wish to thank Paolo Baudissone for early
      exploration of some the ideas in this paper.}
  }
\author{ \\
  Seungki Min \\
  Graduate School of Business \\
  Columbia University \\
  \emailhref{smin20@gsb.columbia.edu} 
  \and \\
  Ciamac C. Moallemi \\
  Graduate School of Business \\
  Columbia University \\
  \emailhref{ciamac@gsb.columbia.edu}  \\
  \and \\
  Daniel J. Russo \\
  Graduate School of Business \\
  Columbia University \\
 \emailhref{djr2174@gsb.columbia.edu} \\
}
\date{
 Current Revision: June 2020}
\begin{document} 

\maketitle
\singlespacing

\begin{abstract}
  We study the use of policy gradient algorithms to optimize over a class of generalized Thompson
  sampling policies. Our central insight is to view the posterior parameter sampled by Thompson
  sampling as a kind of pseudo-action. Policy gradient methods can then be tractably applied to
  search over a class of sampling policies, which determine a probability distribution over
  pseudo-actions (i.e., sampled parameters) as a function of observed data. We  also propose 
  and compare policy gradient estimators that are specialized to Bayesian bandit problems. 
  Numerical experiments demonstrate that direct policy search on top of Thompson sampling automatically corrects for
  some of the algorithm's known shortcomings and offers meaningful improvements even in long
  horizon problems where standard Thompson sampling is extremely effective.
\end{abstract}


\onehalfspacing


\section{Introduction} \label{sec:intro}

In both academia and industry, Thompson sampling has emerged as a leading approach to exploration
in online decision making. This is driven by the algorithm's simplicity, generality, ability to
leverage rich prior information about problem, and its resilience to delayed feedback. But,
like most popular bandit algorithms, it is a heuristic design based on intuitive appeal and some
degree of mathematical insight. The tutorial paper by \citet{russo2018tutorial} details numerous
settings in which Thompson sampling can be grossly suboptimal. We highlight several such
situations:
\begin{itemize}
\item Settings where the time horizon is short relative to the number of arms. As an extreme case,
  in the situation there is a single period remaining, the myopic policy is optimal and Thompson
  sampling will over explore. At another extreme,  if there are many
  arms, it may be optimal to only restrict exploration to a subset so that a good arm can be
  identified in time to exploit over a reasonable time frame.
\item Thompson sampling does not directly consider reward noise. If there is a significant
  heterogeneity in the noise across arms, Thompson sampling may suboptimally pull noisy arms about
  which there is little hope to learn.
\item In settings with correlated arms, pulling a single arm may provide information about many
  other arms. In these settings, there may be ``free exploration'' where, for example, a myopic
  policy might learn about all arms and the type of explicit exploration undertaken by Thompson
  sampling may be wasteful.
\end{itemize}
An underlying theme in the above example is the fact that Thompson sampling does not make an
explicit exploration-exploitation trade off. Even in less extreme settings, Thompson sampling is
generally thought to explore too aggressively.

Thompson sampling is designed for a Bayesian multi-armed bandit problem, 
a well-defined optimization problem that has long been approached using the
tools of dynamic programming. Despite this, the literature offers no way to use computation,
rather than human ingenuity, to improve on standard Thompson sampling. We propose and benchmark
the use of policy gradient methods to optimize over a given family of Thompson sampling style
algorithms. The proposed methods use substantial offline computation, but the resulting policies
can be executed without additional real-time computation.

At first glance, it appears standard policy gradient algorithms \citep{williams1992simple} cannot
be efficiently applied to Thompson sampling. The challenge is that traditional policy gradient
methods require computation of the score function of the distribution of actions. While, in
principle, Thompson sampling randomly draws an action at each decision point, the distribution
over actions from which it samples is not available in closed form. Instead, efficient
implementations sample a model parameter from a posterior distribution and then select the action
that is optimal under this sampled parameter. Under such an implementation, it may be difficult to compute the probability of selecting each action.

Our central insight is as follows: we view the posterior parameter sampled by Thompson sampling as
a kind of ``pseudo-action''. In our framework, a \emph{sampling policy} maps the history of
observations to a probability distribution over the parameter space. A full algorithm will, in
each period, draw a sample from according to the sampling policy and subsequently apply the base
action that is optimal under the sampled parameter. In standard Thompson sampling, the sampling
policy applies Bayes' rule, mapping any history to the associated posterior distribution over
pseudo-actions (parameters). Mathematically, this is equivalent to the standard formulation which
views the decision as a choice of base action. Critically, however, by viewing the decision as a
choice of pseudo-action (parameter), the distribution of pseudo-actions for Thompson sampling is
often available in closed-form: it is simply the posterior distribution of the parameter.  This simple
but powerful shift in perspective enables us to search over Thompson sampling style policies using
policy gradient methods.  Indeed, we will use policy gradient to search over a class of sampling
policies that are themselves parameterized by hyper-parameters we call \emph{meta-parameters}.

A sampling policy could be parameterized in many ways. One option is to parameterize them by
complex neural networks. Our experiments demonstrate that even simple modifications of standard
Thompson sampling offer substantial benefit. One approach builds on Thompson sampling by viewing
the statistical parameters of the Bayesian model (e.g., the prior distribution, the noise
distribution, etc.) as meta-parameters. Another takes the posterior distribution used by standard
Thompson sampling and reshapes it.  Policy gradient methods for searching over the meta-parameters
are tractable as long as (i) the sampling policies can be applied efficiently and (ii) given any
history, one can efficiently calculate derivatives of the sampling distribution's log density with the respect to the meta-parameters. Typically (ii) requires that probability
density function is known up to a proportionality constant.

Our work has a close conceptual connection to work on meta-learning \citep[see
e.g.,][]{finn2017model, baxter2000model}.  As is nicely articulated by \citet{bastani2019meta},
many companies face a large sequence of experimentation tasks, raising the question of how to
effectively share information across these tasks. Consider a web company who may run thousands of A/B
tests per year, giving them strong prior knowledge of the distribution of effect sizes and click
through rates. Or a news article recommendation service has a new set of articles each day and
needs to experiment to learn which will be popular. Each day can be viewed as its own instance of
a bandit problem and the platform's goal is to do well on average across a large number of
days. \citet{bastani2019meta} suggest an empirical Bayesian approach, where the prior of Thompson
sampling is statistically estimated from data on previous tasks. This view of meta-learning as
learning a prior distribution has long been recognized. Our approach, however, will not be to
apply Thompson sampling directly using some form of statistically learned prior, since Thompson
sampling is not itself an optimal policy. If historical data can be used to build a simulator of
this meta-bandit problem, then it is more appropriate to search over Thompson sampling like policies
aiming to directly optimize the true performance metric, the average reward. This idea ---
shifting from learning elements of the statistical model such as the prior distribution or noise
model via statistical estimation to direct optimization --- may especially be powerful in settings
where the statistical model is mis-specified.

Our contributions are as follows:
\begin{enumerate}
\item \textbf{\sffamily We develop a tractable framework for policy gradient estimation for
    sampling policies.}

  Several recent works have explored the use of gradient based search to tune bandit algorithms
  \citep{duan2016rl, boutilier2020differentiable}. Relative to these works, one of our main
  contributions is to uncover a way to apply policy gradient methods to Thompson sampling,
  allowing us to fine-tune a widely used algorithm with strong theoretical guarantees.  Very
  recently, an independent and contemporaneous pre-print by \citet{kveton2020differentiable}
  discovered a similar approach to tuning Thompson sampling.

\item \textbf{\sffamily We provide and analyze multiple gradient estimators for sampling policies.}

  As in the broader application of policy gradient for reinforcement learning, there are
  multiple possible gradient estimators possible, through different choices of reward metrics
  and baselines. We derive several novel policy gradient estimators that are specifically
  tailored to Bayesian bandit problems. We are able to compare their variance theoretically and
  empirically.

\item \textbf{\sffamily We computationally demonstrate the benefits of our approach.}

  Through simple numerical experiments, we provide a compelling proof of concept. Policy search
  produces policies that correct for shortcomings of Thompson sampling in short horizon problems
  or problems with large discrepancies between the variances of arm rewards. Perhaps more
  surprisingly, policy search offers substantial improvements over Thompson sampling even
  in a canonical long horizon problem to which it is ideally suited. We also compare against
  optimistic Gittins indices \citep{gutin2016optimistic}, information directed sampling
  \citep{russo2018learning}, and Bayesian upper confidence bound algorithms
  \citep{kaufmann2012bayesian}, confirming that direct policy search on top of Thompson sampling
  produces state of the art results for widely studied problem settings. In the future, we hope
  to extend the numerical experiments beyond problems with independent arms.
\end{enumerate}

%


\section{Model} \label{sec:model}

We consider a multi-armed bandit problem in a Bayesian setting.

\seclabel{Rewards.}
Let $\Ascr$ be the set of arms, possibly infinite, among which the decision maker (DM) can select at each time $t = 1, \ldots, T$.
When the DM pulls an arm $a \in \Ascr$ at time $t$, they earn a random reward $R_{a,t}$ drawn from a distribution $\Rscr$ that is parameterized by \emph{model parameters} $\theta \in \Theta$, i.e.,
\begin{equation}\label{eq:r-dist}
        R_{a,t} | \theta \, \sim \, \Rscr( \theta, a )  .
\end{equation}
We assume that the rewards $(R_{a,t})_{t \in [T]}$ are conditionally independent and identically
distributed\footnote{One important case where the rewards are \emph{not} i.i.d.\ is the case of
  contextual bandits. Here, the reward at time $t$ is given by
  $R_{a,t} | \theta,x_t \, \sim \, \Rscr( \theta, a, x_t )$. Here $x_t$ is the context at time
  $t$, and this is a stochastic process that evolves independently of the DM's actions. The
  framework we develop here can be extended to accommodate contextual cases as well in a
  straightforward fashion, but in the interest of simplifying the presentation we will not be
  explicit about this.}  given $\theta$, for each $a \in \Ascr$.  We further define
$\mu \colon \Theta \times \Ascr \rightarrow \mathbb{R}$ to be the \emph{mean reward function},
i.e.,
\begin{equation} \label{eq:mean-reward}
        \mu( \theta, a ) \defeq \E_{R \sim \Rscr(\theta,a)}\left[ R \right]   .
\end{equation}
As we consider a Bayesian setting, we view $\theta$ as a (multivariate) random variable that is
sampled from a \emph{prior distribution}, which we denote by $\Pscr( y_0 )$, i.e.,
\begin{equation}
        \theta \sim \Pscr( y_0 )  ,
\end{equation}
where $y_0 \in \Yscr$ are the sufficient statistics that parameterize the prior distribution.

\seclabel{Information set.}
The parameters $\theta$ are unknown to the DM, but can be inferred by observing the reward realizations sequentially revealed over time.
More precisely, let $H_t$ be the history, or information revealed up to time $t$ (inclusive):
\begin{equation} \label{eq:history}
        H_t \defeq \left( A_s, R_{A_s,s} \right)_{s \in [t]}.
\end{equation}
This includes the actions taken by the DM and the rewards realized through time $t$.  We assume that
the DM has knowledge of both the prior distribution $\Pscr(y_0)$ and the functional form of reward
distribution $\Rscr$.

As a Bayesian learner, the DM will update their belief according to Bayes' rule whenever observing a new reward realization, and thus maintain a \emph{posterior distribution} for $\theta$ at each time.
Without loss of generality,\footnote{
                When $\Pscr$ is a conjugate prior of $\Rscr$ and belongs to the exponential family, the sufficient statistics $Y_t$ will admit a compact representation.
                In other cases, $Y_t$ may represent the entire history, i.e., $Y_t = H_t$.
} we assume that the posterior is represented as
\begin{equation} \label{eq:posterior}
        \theta | H_t ~\sim~ \Pscr(Y_t).
\end{equation}
Here, $Y_t \in \Yscr$ is a random variable that denotes the sufficient statistics of the posterior distribution after observing the history $H_t$ (i.e., after observing the $t^\text{th}$ reward realization).
We set $Y_0 = y_0$.

We will describe the randomness of our stochastic model more explicitly as follows.  The DM's policy $\pi$
is described by a sequence of deterministic mappings $(\pi_t)_{t \in [T]}$. Each mapping
$\pi_t \colon \Hscr_{t-1} \times \Escr \rightarrow \Ascr$ specifies the next action $A_t$ as a
function of history $H_{t-1} \in \Hscr_{t-1}$ that is revealed immediately prior to time $t$, and
random noise $\epsilon_t \in \Escr$ that can be utilized for randomization in the choice of
action.  Similarly, the reward realization is described by a mapping $r\colon\Theta \times \Ascr \times \Xi \rightarrow \R$ that specifies the next
reward realization, where any randomness is generated by the noise variable $\xi_t \in \Xi$. That
is,
\begin{equation} \label{eq:explicit-randomness}
        A_t = \pi_t( H_{t-1}, \epsilon_t )  , \quad
        R_{a,t} = r( \theta, a, \xi_t )  .
\end{equation}
We assume, without loss of generality, that the noise random variables
$(\epsilon_t)_{t \in [T]}$ are independent and identically distributed, as are
$(\xi_t)_{t \in [T]}$.

We define an \emph{instance} or \emph{episode}, denoted by $I$, as a random variable that encodes
all uncertainties in the environment, but not the randomness in the DM's decision rule:
\begin{equation}\label{eq: instance}
        I \defeq \left( \theta, ( \xi_t )_{t \in [T]} \right)  .
\end{equation}
In other words, given an instance $I$, we exactly know what rewards will be realized for any given
action sequence $a_{1:T} \in \Ascr^T$ committed by the DM.  The set of all possible instances is
denoted by $\Iscr$.

\seclabel{Objective.}  The DM aims to earn as much reward as possible in expectation.  Given the
DM's decision rule $\pi$, its \emph{expected total reward}, denoted by $\textsc{Reward}[\pi]$, is
defined as
\begin{equation}
        \textsc{Reward}[\pi] \defeq \E\left[ \sum_{t=1}^T R_{A_t,t} \right],
\end{equation}
where the expectation is taken with respect to the randomness of instance (i.e., the randomness of
the parameters $\theta$ and the reward realizations) and also any randomness of the choice of
actions (if $\pi$ is a randomized policy).


To better illustrate our setup, we provide an example of a canonical multi-armed bandit problem
described with the notation introduced above.

\begin{example}[Gaussian MAB] \label{ex:gaussian-MAB} Consider a finite number of arms
  $\Ascr \defeq \{1, \ldots, K\}$. Each arm $a$ yields normally distributed rewards with an
  unknown mean $\theta_a$ and a known variance $\sigma_a^2$, where the prior of $\theta_a$ is
  given by a normal distribution $\Nscr( m_{a,0}, v_{a,0}^2)$. The model parameters are given by
  the vector $\theta \defeq (\theta_a)_{a \in [K]}$.

  Each instance takes the form $I \defeq \left( \theta, ( \xi_t )_{t \in [T]} \right)$, where
  $(\xi_t )_{t \in [T]}$ are i.i.d.\ standard normal random variables that randomize the reward
  realizations according to
        \begin{equation}
                \mu( \theta, a ) = \theta_a
                 , \quad
                r( \theta, a, \xi_t ) = \theta_a + \sigma_a \xi_t
                 .
        \end{equation}
        The posterior for a parameter $\theta_a$ after time $t$ is given by a normal distribution
        $\Nscr( m_{a,t}, v_{a,t}^2 )$. Here, the sufficient statistics $m_{a,t}$ and $v_{a,t}^2$
        can be computed in a closed-form according to
        \[
        \begin{split}
          m_{a,t} = v_{a,t}^{-2} \times \left( v_{a,0}^{-2} \cdot m_{a,0} + \sigma_a^{-2} \sum_{s=1}^t \I{ A_s = a } R_{A_s,s} \right)
          , \quad
          v_{a,t}^2 = \left( v_{a,0}^{-2} + \sigma_a^{-2} \sum_{s=1}^t \I{ A_s = a} \right)^{-1}
          .
        \end{split}
      \]
      As a collection of these sufficient statistics across the arms,
      $Y_t \defeq ( m_{a,t}, v_{a,t}^2 )_{a \in [K]} \in \mathbb{R}^{2K}$, determine the posterior
      of the parameters $\theta$ given the history $H_t$.
\end{example}


\section{Parameterized Thompson Sampling} \label{sec:param-ts}

Thompson sampling (\textsc{TS}) \citep{thompson1933likelihood} is a randomized policy that works as
follows.  At each time $t = 1, \ldots, T$: (i) the parameters $\tilde{\theta}_t$ are sampled from
the posterior distribution $\Pscr( Y_{t-1} )$ given all information prior to time $t$; and (ii) an
action is chosen to maximize the expected reward given these sampled parameters
$\tilde{\theta}_t$. In other words,
\begin{equation}\label{eq:TS}
        \tilde{\theta}_t \sim \mathcal{P}( Y_{t-1} ) ,
        \quad
        A_t^{\textsc{TS}} \gets \argmax_{a \in \Ascr} \mu( \tilde{\theta}_t, a ) ,
\end{equation}
where $Y_{t-1} \in \Yscr$ are, as introduced in \eqref{eq:posterior}, the sufficient statistics
describing the posterior distribution of true parameters $\theta$ given the history $H_{t-1}$.  As
time progresses, the above procedure is repeated, while updating the posterior distribution
according to Bayes' rule.

An important characteristic of \textsc{TS} is ``probability matching''. Under
\textsc{TS},  the probability that an arm $a$ is
selected at time $t$ is equal to the probability that the arm $a$ is indeed the best one that
Bayesian inference predicts, i.e.,
\begin{equation} \label{eq:arm-selection-prob}
  \PR\big( A_t^\textsc{TS} = a \, | \, H_{t-1} \big)
  = \PR\left( a = \argmax_{a' \in \Ascr} \mu( \theta, a' ) \, | \, H_{t-1} \right) ,
  \quad \forall a \in \Ascr.
\end{equation}
However, probability in \eqref{eq:arm-selection-prob} is difficult to evaluate since it does not
admit a closed-form expression in most cases and does not admit feasible policy gradient
estimators.

We consider a class of variants of \textsc{TS} where the sampling policy in \eqref{eq:TS} is not
the posterior distribution, but instead is some other distribution parameterized by
\emph{meta-parameters} $\lambda \in \Lambda \subseteq \mathbb{R}^d$.  In other words, given
$\lambda$, the corresponding \emph{sampling policy} $\textsc{TS}(\lambda)$ repeats the following
at each time $t$:
\begin{equation} \label{eq:general-parameterization}
        \tilde{\theta}_t \sim \Pscr_\lambda( H_{t-1} ),
        \quad
        A_t^{\textsc{TS}(\lambda)} \gets \argmax_{a \in \Ascr} \mu( \tilde{\theta}_t, a ),
\end{equation}
where $\Pscr_\lambda( H_{t-1} )$ is a distribution on the parameter space $\Theta$ that has
arbitrary dependency on the meta-parameters $\lambda$ and the history $H_{t-1}$.  The sampling
policy $\textsc{TS}(\lambda)$ is almost identical to the na\"ive \textsc{TS} except that it
samples the parameters from $\Pscr_\lambda(H_{t-1})$ instead of $\Pscr(Y_{t-1})$. In this way,
sampling policies can be viewed as a natural generalization of $\textsc{TS}$, emitting at each
time a randomized pseudo-action $\tilde \theta_t$ (choice of parameters) from which a base action
is determined, rather than directly emitting a base action.

As we would like to employ policy gradient methods to optimize over the meta-parameters $\lambda$,
we will assume that the probability density (or mass) function of the distribution
$\Pscr_\lambda( H_{t-1} )$ is differentiable with respect to $\lambda$ over its domain $\Lambda$,
for any realization of history $H_{t-1}$ almost surely. Aside from this, the distribution
$\Pscr_\lambda( H_{t-1} )$ defining the sampling policy is allowed to be essentially
arbitrary. However, in order to illustrate our ideas, consider the following example:

\begin{example}[Posterior reshaping.]
We adopt and generalize the idea proposed in \citet{ChapelleLi}, and let the algorithm to sample
the parameters from the a reshaped posterior distribution,
\begin{equation} \label{eq:posterior-reshaping}
  \Pscr_\lambda( H_{t-1} ) = \Pscr\big( \phi_\lambda( Y_{t-1} ) \big) ,
\end{equation}
where $ \Pscr( \cdot)$ is the posterior distribution defined in \eqref{eq:posterior}, and
$\phi_\lambda \colon \Yscr \rightarrow \Yscr$ is a differentiable mapping, parameterized by
$\lambda$, that transforms one set of sufficient statistics to another.
\end{example}

Posterior reshaping is motivated by several arguments. Compared to the general parameterization
\eqref{eq:general-parameterization}, the posterior reshaping does not parameterize ``how to
learn''.  Instead, it parameterizes how to utilize the learned results, while maintaining the
Bayesian learning logic as it is.  This can significantly reduce the effort required for tuning
the meta-parameters. Moreover, its implementation requires a minimal effort once one has
already implemented the standard \textsc{TS}. Indeed, when $\phi_\lambda(\cdot)$ is the identity
map, posterior reshaping reduces to standard \textsc{TS}. Hence, under appropriate technical
assumptions, a local policy gradient search starting at the identity map will be guaranteed to do
no worse than standard \textsc{TS}.

As discussed in the introduction, the standard \textsc{TS} suffers from the over-exploration,
for example when the time horizon is short relative to the number of arms. Posterior reshaping can
naturally address this, by reducing uncertainty in the sampling distribution.  As an extreme
example, consider a situation where we are given a single time period (i.e., $T=1$).  The optimal
policy is myopic --- the optimal action is to pick the arm with the largest prior mean --- which
can be implemented by reshaping the posterior distribution to concentrate on the prior mean. Such
posterior concentration also appears in the work of \citet{information-relaxation-sampling}. The
\textsc{Irs.FH} policy they suggest is posterior reshaping sampling policy.

Furthermore, it is possible that \textsc{TS} with the correct model parameters (e.g., the
specification of the prior distribution or the reward distribution) may not be optimal for the
performance of the algorithm.  Within the framework of posterior reshaping, we can find the set of
model parameters that are empirically tuned for performance so as to outperform to the one with
the correct values.\footnote{Even if the model is mis-specified, the policy gradient method can be
  applied, but we need to be careful in the choice of gradient estimator in order to avoid a bias
  in the gradient estimation.  See the related discussion in
  Section~\ref{sec:reward-metric-choices}.  }

A concrete examples of posterior reshaping in the Gaussian case can be developed as follows:
\begin{example}[Posterior reshaping for Gaussian MAB] \label{ex:posterior-reshaping-Gaussian}
  Using the notation of Example~\ref{ex:gaussian-MAB}, standard \textsc{TS} in Gaussian MAB
  samples the parameters $\tilde{\theta}_t = ( \tilde{\theta}_{a,t} )_{a \in \Ascr}$ according to
  \begin{equation}
    \tilde{\theta}_{a,t} \sim \Nscr\left(
      \frac{ m_{a,0} + \frac{v_{a,0}^2}{ \sigma_a^2} \cdot S_{a,t-1} }{ 1 + \frac{v_{a,0}^2}{ \sigma_a^2}  \cdot N_{a,t-1}  }, ~
      \frac{ v_{a,0}^2 }{1 + \frac{v_{a,0}^2}{ \sigma_a^2}  \cdot N_{a,t-1}}
    \right),
  \end{equation}
  for each $a=1,\ldots,K$, where the sufficient statistics are given by
  $S_{a,t-1} \defeq \sum_{s=1}^{t-1} \I{ A_s = a } R_{A_s,s}$, and
  $N_{a,t-1} \defeq \sum_{s=1}^{t-1} \I{ A_s = a}$.

  We consider posterior reshaping with meta-parameters
  $\lambda \defeq \left( \lambda_a^m, \lambda_a^v, \lambda_a^\sigma, \lambda_a^\gamma \right)_{a
    \in \Ascr} \in \mathbb{R}^{4K}$ under which $\tilde{\theta}_{a,t}$ is sampled from
  \begin{equation}
    \tilde{\theta}_{a,t} \sim \Nscr\left(
      \frac{ \lambda_a^m + \lambda_a^\sigma \cdot S_{a,t-1} }{ 1 + \lambda_a^\sigma \cdot N_{a,t-1}  }, ~
      \frac{ \lambda_a^v ( 1- t/T)^{\lambda_a^\gamma} }{ 1 + \lambda_a^\sigma \cdot N_{a,t-1}}
    \right).
  \end{equation}
  This policy reduces to standard \textsc{TS} if we take $\lambda_a^m = m_{a,0}$ (prior mean),
  $\lambda_a^v = v_{a,0}^2$ (prior variance), $\lambda_a^\sigma = v_{a,0}^2 / \sigma_a^2$
  (precision ratio), and $\lambda_a^\gamma = 0$ (variance decay exponent).  The amount of
  exploration is controlled by $\lambda_a^v$ and $\lambda_a^\gamma$. In particular, the term
  $( 1 - t/T )^{\lambda_a^\gamma}$ diminishes exploration near the end of the horizon, where
  the benefit from exploration is limited.

  Note that this parameterization scheme can be represented in the form of
  \eqref{eq:posterior-reshaping}, since the sufficient statistics of the realized observations
  (i.e., $S_{a,t-1}$ and $N_{a,t-1}$) are uniquely determined from those of current posterior
  distribution (i.e., $m_{a,t-1}$ and $v_{a,t-1}^2$) and prior distribution (i.e., $m_{a,0}$ and
  $v_{a,0}^2$).  Also note that the probability density function is differentiable with respect to
  $\lambda$ given that $\lambda_a^v > 0$ and $\lambda_a^\sigma > 0$.
\end{example}

A more complex example is as follows:
\begin{example}[Deep recurrent neural network parameterization]\label{ex:dnn}
  One might consider a recurrent neural network (RNN) structure with, at each time $t$, input
  $(A_t, R_{A_t,t})$, hidden state $\tilde Y_t$ and output being the sampled pseudo-action
  $\tilde \theta_t$. The network would evolve according to
  \[
    \tilde Y_{t} \gets \phi^Y_{\lambda_Y}(\tilde Y_{t-1}, A_t, R_{A_t,t}),
    \quad
    \tilde \theta_t \sim \mathcal{P}\left(\phi^\theta_{\lambda_\theta}(\tilde Y_{t-1})\right).
  \]
  Here, the hidden state $\tilde Y_t$ is analogous to a sufficient statistic in that it summarizes
  the history up to an including time $t$. Two deep neural networks, $\phi^Y_{\lambda_Y}(\cdot)$
  and $\phi^\theta_{\lambda_\theta}(\cdot)$, with weights $\lambda_Y$ and $\lambda_\theta$, govern
  the evolution of the hidden state $\tilde Y_t$ and the output $\tilde \theta_t$,
  respectively. The meta-parameters $\lambda\defeq (\lambda_Y,\lambda_\theta)$ would be optimized
  with policy gradient methods.
\end{example}
Example~\ref{ex:dnn} is in the spirit of the approach of \citet{duan2016rl} and
\citet{boutilier2020differentiable}, where RNNs were fit with policy search methods, but where
the policies output actions. Here, to contrast, the RNN outputs distributional parameters which
are then sampled, leveraging on top of the structure of Thompson sampling.


\section{Policy Gradient for Thompson Sampling}

We aim to search over the meta-parameters $\lambda \in \mathbb{R}^d$ so that the corresponding policy $\textsc{TS}(\lambda)$ improves over the original \textsc{TS} significantly. For this purpose, we adopt the policy gradient framework, which applies variants of stochastic gradient ascent to optimize total expected reward. Formally, one can  have in mind the iteration,
\begin{equation}\label{eq: sgd}
\lambda_{k+1} = \lambda_k + \alpha_k G(\lambda_k, w_k)
\end{equation}
where $(\alpha_k)$ is a step-size sequence, $(w_{k})$ is a sequence of i.i.d.\ random
variables, and $G(\lambda_k, w_k)$ is an unbiased gradient, i.e.,
\[
  \E\left[ G(\lambda, w_k ) \right]  =
  \nabla_\lambda \, \textsc{Reward}\big[ \textsc{TS}(\lambda) \big]
= \nabla_\lambda \, \E\left[ \sum_{t=1}^T R_{A_t^{\textsc{TS}(\lambda)},t} \right].
\]
Typically, the $w_k$ denote the randomness used by a stochastic simulator. For the gradient
estimators we use, $w_k\defeq \big( \epsilon^{(k)}_t, \xi^{(k)}_t \big)_{t\in [T]}$ consists of
realizations of the random noise terms that determine the reward realizations and the action
selection of a randomized algorithm. In deriving and comparing gradient estimators, we omit the
dependence on $k$.  It is worth noting that the iteration \eqref{eq: sgd} is meant for
illustrative purposes, and other first order stochastic methods, e.g., Adam
\citep{kingma2014adam}, can also be utilized.

\subsection{Score Function Gradient Estimation}


Most implementations of policy gradient use score function gradient estimation \citep{williams1992simple}. However, the conventional scheme requires computing $\nabla_{\lambda} \log  \PR\big( A_t^{\textsc{TS}(\lambda)} = a \, \big)$, which is typically intractable since there is no closed-form expression for the distribution of the chosen action.

We circumvent this issue by interpreting the sampled parameters $\tilde{\theta}_t$ as an
(pseudo-)action taken by the policy at time $t$.  One can imagine an equivalent bandit environment
whose action space is set to the parameter space $\Theta$ and the decision maker earns the reward
$\tilde{R}_{\tilde{\theta}_t, t}\defeq R_{A_t, t}$ associated with the arm
$A_t \defeq \argmax_a \mu( \tilde{\theta}_t, a )$ as a result of his decision
$\tilde{\theta}_t$. Assume that the sampling distribution $\Pscr_\lambda( H_{t-1} )$ under the any
$H_{t-1}$ has a probability density function $p_{\lambda}( \, \cdot \, ; H_{t-1})$. Assume as well that
$p_{\lambda}( \, \cdot \, ; H_{t-1})$ is differentiable as a function of $\lambda$. This leads to the
gradient estimator
\begin{equation} \label{eq:gradient-estimator-naive}
G \defeq \sum_{t=1}^T S_t \sum_{s=t}^T R_{A_s^{\textsc{TS}(\lambda)},s}
\end{equation}
where
\begin{equation}
        S_t \defeq \nabla_{\lambda} \log p_{\lambda}\left( \tilde{\theta}_t \, ; H_{t-1} \right)
\end{equation}
denotes the score functions. This form of score function gradient estimator is well known to be
unbiased, i.e., $\E[ G ] = \nabla_\lambda \textsc{Reward}\big[ \textsc{TS}(\lambda) \big]$, which
is referred to as the policy gradient theorem in the reinforcement learning literature. Formally,
unbiasedness requires technical conditions that allow for the interchange of integrals and
derivatives. We refer to \cite{l1995note} for appropriate conditions.

\subsection{Admissible Gradient Estimators}

The standard gradient estimator \eqref{eq:gradient-estimator-naive} can be very noisy due to the
high variability of reward realizations and random action selections. In this section, we propose
a broader, more general class of gradient estimators, and demonstrate that they remain unbiased as
long as they satisfy a certain admissibility requirement. Later, we will suggest a
specific list of estimators for bandit problems, and provide a theoretical comparison among them
in terms of variance reduction.

\seclabel{General representation.}  With processes $M \defeq (M_t)_{t \in [T]}$, which we call a
\emph{reward metric}, and $B \defeq (B_t)_{t \in [T]}$, which we call a \emph{baseline}, we define
the gradient estimator $G^{M,B}$ as follows:
\begin{equation} \label{eq:gradient-estimator-general}
        G^{M,B} \defeq \sum_{t=1}^T S_t \cdot \left( M_t - B_t \right)  .
\end{equation}
The reward metric $M_t$ is a random variable that accounts for the sum of rewards that the policy
earns on the remaining horizon $t, t+1, \ldots, T$, and $B_t$ is another random variable that
represents some benchmark for the rewards over the same period.  Note that the estimator
\eqref{eq:gradient-estimator-naive} can be obtained by taking $M_t = \sum_{s=t}^T R_{A_s,s}$ and
$B_t = 0$.

\seclabel{Admissibility.}  We state a condition on the reward metric $M$ and the baseline $B$
under which they induce an unbiased gradient estimator $G^{M,B}$.

\begin{definition}[Admissible reward metric and baseline]
  A reward metric $M$ is admissible if for all $t \in [T]$ it is integrable and
  \begin{equation} \label{eq:valid-reward-metric}
    \E\left[  M_t \left| H_{t-1}, \tilde{\theta}_t \right. \right]
    = \E\left[ \left. \sum_{s=t}^T R_{A_s,s} \right| H_{t-1}, \tilde{\theta}_t \right].
  \end{equation}
  A baseline $B$ is admissible if $B_t$ is integrable and $B_t$ and $\tilde{\theta}_t$ are
  conditionally independent given $H_{t-1}$ for all $t \in [T]$, i.e.,
  \begin{equation} \label{eq:valid-baseline}
    \left. B_t \perp \!\!\! \perp  \tilde{\theta}_t \, \right| \, H_{t-1} .
  \end{equation}
\end{definition}
The first condition \eqref{eq:valid-reward-metric} ensures that a risk-neutral decision maker
would not differentiate between $M_t$ and the sum of future rewards when deciding the next
action. These two measures have the same expectation given any history (which corresponds to the
state in dynamic programming terms) and any pseudo-action $\tilde{\theta}_t$.  The second
condition \eqref{eq:valid-baseline} ensures that the decision-maker does not need to take baseline
into consideration when making a decision, since the baseline is independent of the next pseudo-action $\tilde{\theta}_t$.



The above interpretation implies that the substitution of reward metric (from
$\sum_{s=t}^T R_{A_s,s}$ to $M_t$) and the presence of baseline $B_t$ do not affect the DM's
decision at each time $t$, as long as they satisfy the admissibility conditions
\eqref{eq:valid-reward-metric}--\eqref{eq:valid-baseline}.  Therefore, we can infer that the
generalized gradient estimator \eqref{eq:gradient-estimator-general} is equal in expectation to
the standard one \eqref{eq:gradient-estimator-naive}, which is proved formally in the following
theorem.


\begin{theorem}[Unbiasedness of gradient estimator] \label{thm:unbiasedness}
       If the reward metric $M$ and the baseline $B$ are admissible, then $\E[ G^{M,B} ] = \E[G]$.
\end{theorem}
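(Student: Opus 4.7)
The proof will proceed by splitting $G^{M,B}$ into its reward-metric part $\sum_t S_t M_t$ and its baseline part $\sum_t S_t B_t$, and handling each term separately via the tower property of conditional expectation. The overall strategy is standard for variance-reduction arguments in policy gradient (the two tricks are sometimes called the ``causality'' and ``baseline'' tricks), but here we need to carefully track the information structure, since $\tilde\theta_t$ is playing the role of the pseudo-action and $S_t$ is the associated score function.

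\textbf{Step 1: the reward-metric term.} I would first show term-by-term that
\[
    \mathbb{E}\bigl[ S_t M_t \bigr]
    = \mathbb{E}\!\left[ S_t \sum_{s=t}^T R_{A_s,s} \right].
\]
To see this, condition on the $\sigma$-algebra generated by $(H_{t-1}, \tilde\theta_t)$. By definition $S_t = \nabla_\lambda \log p_\lambda(\tilde\theta_t; H_{t-1})$ is measurable with respect to this $\sigma$-algebra, so it pulls out of the inner expectation. The admissibility condition \eqref{eq:valid-reward-metric} then matches the conditional expectation of $M_t$ with that of $\sum_{s\ge t} R_{A_s,s}$. Taking outer expectations and summing over $t$ gives $\mathbb{E}[\sum_t S_t M_t] = \mathbb{E}[G]$.

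\textbf{Step 2: the baseline term.} Next I would show $\mathbb{E}[S_t B_t] = 0$. Conditioning on $H_{t-1}$, conditional independence \eqref{eq:valid-baseline} gives
\[
    \mathbb{E}[S_t B_t \mid H_{t-1}]
    = \mathbb{E}[S_t \mid H_{t-1}] \, \mathbb{E}[B_t \mid H_{t-1}].
\]
The key identity is the classical score-function identity
\[
    \mathbb{E}[S_t \mid H_{t-1}]
    = \int \nabla_\lambda \log p_\lambda(\theta; H_{t-1}) \, p_\lambda(\theta; H_{t-1}) \, d\theta
    = \nabla_\lambda \int p_\lambda(\theta; H_{t-1}) \, d\theta
    = 0,
\]
where the interchange of derivative and integral is justified under the same regularity conditions that are needed for unbiasedness of the standard estimator (see \citet{l1995note}, as cited earlier). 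Combining the two steps and summing yields the claim.

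\textbf{Anticipated obstacles.} The algebra itself is short, so the only real subtleties are technical. First, one must verify integrability of $S_t M_t$ and $S_t B_t$ so that the Fubini-type interchanges and conditional expectation manipulations are legitimate; this is where an integrability hypothesis on $M$ and $B$ (already in the definition) and a mild tail condition on the score function are invoked. Second, one must be careful that the admissibility condition on $B$ is stated as conditional independence given $H_{t-1}$ alone (not given $(H_{t-1}, \tilde\theta_t)$), which is exactly what is needed to pull $B_t$ apart from $S_t$ inside the conditional expectation. These are bookkeeping points rather than genuine obstacles; the proof reduces to two applications of the tower property plus the score-function identity.
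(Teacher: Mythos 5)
Your proposal is correct and follows essentially the same route as the paper's proof: condition on $\sigma(H_{t-1},\tilde\theta_t)$ to handle the reward-metric term via admissibility condition \eqref{eq:valid-reward-metric}, and condition on $H_{t-1}$ together with the score-function identity $\E[S_t\mid H_{t-1}]=0$ and the conditional independence \eqref{eq:valid-baseline} to kill the baseline term. Your added remarks on integrability and the derivative--integral interchange are consistent with the paper's appeal to \citet{l1995note}.
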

\begin{proof}
        Note that $S_t$ is measurable with respect to $\sigma\big( H_{t-1}, \tilde{\theta}_t \big)$ and
        \begin{equation}
                \E\left[ \left. S_t \right| H_{t-1} \right] = 0 ,
        \end{equation}
        due to the property of the score function.
        By the condition \eqref{eq:valid-reward-metric}, we obtain
        \[
          \begin{split}
                \E\left[ S_t M_t \right]
                 & = \E\left[ \mathbb{E}\left( S_t M_t \left| H_{t-1}, \tilde{\theta}_t
                     \right. \right) \right] \\
                 &= \E\left[ S_t \times \E\left(  M_t \left| H_{t-1}, \tilde{\theta}_t \right. \right) \right]
                 \\&= \E\left[ S_t \times \E\left( \left. \sum_{s=t}^T R_{A_s,s} \right| H_{t-1}, \tilde{\theta}_t \right) \right]
                 \\&= \E\left[ \E\left( S_t \times \left. \sum_{s=t}^T R_{A_s,s} \right| H_{t-1}, \tilde{\theta}_t \right) \right]
                 \\&= \E\left[  S_t  \times \sum_{s=t}^T R_{A_s,s} \right]
                 .
               \end{split}
             \]
             By the condition \eqref{eq:valid-baseline}, we further obtain
        \[
          \E\left[ S_t B_t \right]
          = \E\left[ \E\left(  S_t  B_t | H_{t-1} \right) \right]
          = \E\left[ \E\left(  S_t | H_{t-1}  \right) \times \E\left( B_t | H_{t-1} \right) \right]
          = 0.
        \]
        Combining these results, we deduce that
        \[
          \E\left[ G^{M,B} \right]
          = \E\left[ \sum_{t=1}^T S_t \times (M_t - B_t) \right]
          = \E\left[ \sum_{t=1}^T S_t \sum_{s=t}^T R_{A_s,s} \right]
          = \E[ G ],
        \]
        which concludes the proof.
\end{proof}

One particularly interesting class of reward metrics and baselines are those which are time separable:

\begin{example}[Time separable reward metric and baseline]\label{ex:sep}
  Consider
  \[
    M_t \defeq \sum_{s=t}^T \hat{r}_s( A_{1:s}, I ), \quad B_t \defeq b_t( A_{1:t-1}, I ),
  \]
  where $\hat{r}_t\colon \Ascr^t \times \Iscr \rightarrow \mathbb{R}$ and
  $b_t\colon \Ascr^{t-1} \times \Iscr \rightarrow \mathbb{R}$ are the deterministic functions that
  satisfy
  \[
    \E\left[ \hat{r}_t( a_{1:t}, I ) \, | \, H_{t-1}, A_{1:t-1} = a_{1:t-1} \right]
    = \E\left[ r( \theta, a_t, \xi_t ) \, | \, H_{t-1}, A_{1:t-1} = a_{1:t-1} \right]
    , \quad \forall\ a_{1:t} \in \Ascr^t,\ t \in [T].
  \]
  Then, the reward metric $M \defeq (M_t)_{t \in [T]}$ and the baseline
  $B \defeq (B_t)_{t \in [T]}$ are admissible.
\end{example}

We remark that the baseline is allowed to be \emph{instance-dependent}, meaning that it can depend
on instance $I$ defined in \eqref{eq: instance} that determines the realizations of rewards and
the true parameter $\theta$. This is a considerable generalization of the literature in which
baselines are typically chosen as a deterministic function of state
\citep{sutton2018reinforcement}. The use of common randomness \citep{glasserman1992some} in the
the baseline and the reward metric can reduce variance, especially when most variation in observed
algorithm performance is driven by different realizations of the problem instance rather than
differences in the choice of meta parameter.

\subsection{Reward Metrics and Baselines} \label{sec:reward-metric-choices}

We suggest a specific series of reward metrics and baselines that are admissible for Bayesian
bandit problems. A number of these take the time separable form of Example~\ref{ex:sep}.

\seclabel{Reward metrics.}
The followings are possible choices for reward metric:
\begin{enumerate}
        \item The observed reward $M_t^\text{obs} \defeq \sum_{s=t}^T R_{A_s,s}$.
        \item The mean reward $M_t^\text{mean} \defeq \E\left[ \left. \sum_{s=t}^T  R_{A_s,s} \right| \theta, A_{t:T} \right] = \sum_{s=t}^T \mu( \theta, A_s )$.
        \item (MAB with independent arms only) The finite-sample mean-reward estimate
          \[
            M_t^\text{fin} \defeq \sum_{s=t}^T \hat{\mu}_{I,t}(A_s),
          \] where
          $\hat{\mu}_{I,t}(a) \defeq \E\left[ \mu( \theta, a ) \left| H_T, A_s=a, \forall
              s=t,\ldots,T \right. \right] $ that indicates the best estimate for the mean reward
          of an arm $a$ that the DM can infer through a finite number of observations.\footnote{
            This metric is valid only when the arms and their associated priors are
            independent. Using the notation of \eqref{eq:explicit-randomness}, we further need to
            assume that the noise variable takes the form $\xi_t \defeq (\xi_{a,t})_{a \in \Ascr}$
            where $\xi_{a,t}$ independent across $a$.  In order for the DM to retrieve maximal
            information about a particular arm $a$, it is required to pull the arm $a$ throughout
            the entire rest of the horizon (i.e., $A_s = a, \forall s=t,\ldots,T$).  The metric
            $\hat{\mu}_{I,t}(a)$ represents the mean reward estimate that the DM will have in this
            scenario, which also has a dependency on the instance $I$.

        }
        \item The posterior mean $M_t^\text{Bayes} \defeq \sum_{s=t}^T \E\left[ \mu( \theta, A_s ) \left| H_{s-1}, A_s \right. \right]$.
        \item The state-action Q-function
          \[
            M_t^Q \defeq \E\left[ \left. \sum_{s=t}^T  R_{A_s,s} \right| H_{t-1}, A_t \right] =
            \E\left[ \left. \sum_{s=t}^T \mu( \theta, A_s ) \right| H_{t-1}, A_t \right].
          \]
\end{enumerate}
Recall that $\mu( \theta , a )$ is the mean reward function, defined in \eqref{eq:mean-reward}, that is a deterministic function representing the expected reward of an arm $a$ given the parameters $\theta$.

These metrics differ in the information set on which the conditional expectation of the sum of future rewards, $\sum_{s=t}^T R_{A_s,s}$, is taken.
The main motivation for deriving this series of metrics is ``Rao--Blackwellization,'' i.e., integrating out some of the randomness in the future reward realizations and the future action selections.
More specifically, the metric $M_t^\text{mean}$ is obtained from $M_t^\text{obs}$ by integrating out the randomness of immediate reward realization while maintaining the dependency on the (random) parameters $\theta$ and the (random) action sequence $A_{t:T}$.
The metric $M_t^\text{fin}$ is motivated from the fact that knowing the true parameters $\theta$ is as informative as having an infinite number of observations for each arm, and improves over $M_t^\text{mean}$ by taking into account how much the DM can learn about $\theta$ with a finite number observations (i.e., by integrating out the uncertainties in $\theta$ that cannot be identified).
Next, under the metric $M_t^\text{Bayes}$, the DM earns the expected reward given the posterior distribution at each time, which averages out the uncertainties in $\theta$ at each time step.
Finally, the metric $M_t^Q$ represents the Q-value of the given policy, i.e., the expected future reward of the policy at a given state (history) and an action (arm), which averages out the all uncertainties that arise after taking the action $A_t$.

We remark that these reward metrics are mostly taken from \citet{information-relaxation-sampling}.
While the reward metrics $M_t^\text{obs}$ and $M_t^Q$ are applicable for the general Markov decision processes, the other three metrics $M_t^\text{mean}$, $M_t^\text{fin}$ and $M_t^\text{Bayes}$ are valid only for bandit problems, and in particular, $M_t^\text{fin}$ and $M_t^\text{Bayes}$ are valid only in a Bayesian setting. In addition, accurate computation of $M_t^Q$ typically requires averaging over many Monte Carlo simulations (e.g., roll-outs) which may be computationally expensive. 

\seclabel{Baselines.}
We further provide a list of baselines as follows:
\begin{enumerate}
        \item The null baseline $B_t^\text{null} \defeq 0$.
        \item The oracle performance $B_t^\text{oracle} \defeq M_t^\star$ where $M_t^\star$ is the reward (measured with the corresponding reward metric) that the action sequence $A_t^\star = \argmax_{a \in \Ascr} \mu(\theta, a)$ achieves in the \emph{same instance}.
                For example, in a combination with $M_t^\text{mean}$, we obtain $B_t^\text{oracle} = \sum_{s=t}^T \max_{a \in \Ascr} \mu( \theta, a )$.
        \item The self-play baseline $B_t^\text{self} \defeq \tilde{M}_t$ where $\tilde{M}_t$ is the reward (measured with the corresponding reward metric) that an independent run of the same algorithm achieves in the same instance.
                For example, in a combination with $M_t^\text{mean}$, we obtain $B_t^\text{self} = \sum_{s=t}^T \mu( \theta, \tilde{A}_s )$ where $\tilde{A}_{1:T}$ is the action sequence taken in the independent run.
        \item The value function $B_t^V \defeq \E\left[ \left. \sum_{s=t}^T R_{A_s,s} \right| H_{t-1} \right]$.
\end{enumerate}

As proven in Theorem~\ref{thm:unbiasedness}, each of these baselines can be used in a combination with any of the reward metrics listed above.
The baseline $B_t^\text{oracle}$ is an instance-dependent measure that represents the performance of the omniscient policy that knows the values of true parameters $\theta$.
Given that $M_t^\text{mean}$ is chosen as a coupled reward metric, the gap $B_t^\text{oracle} - M_t^\text{mean}$ reduces to the ``regret'' which is a measure of suboptimality that has been widely used in bandit studies. This choice of baseline is natural when we expect adaptive algorithm to have small average regret. It can be less effective in problems with a short time horizon, where the reward earned by an oracle is not an attainable baseline.

The baseline $B_t^\text{self}$ utilizes an independent run of the same randomized policy under the same instance.
The idea of self-play was adopted from \citet{boutilier2020differentiable} while we make a generalization regarding the choice of reward metric and provide a formal proof of its validity.
It effectively centers the reward metric, i.e., $\E[ M_t - B_t^\text{self} ] = 0$, which helps
stabilize gradient estimates. In our numerical experiments, $B_t^\text{self}$ shows an impressive
performance across the different settings, though it effectively requires the computational effort
of running twice as many simulations.

Finally, the baseline $B_t^V$ is constructed analogously to the reward metric $M_t^Q$, and it represents the average performance of the given policy at the given state.
In a combination with $M_t^Q$, the gap $M_t^Q - B_t^V$ measures the relative benefit of the chosen action compared to the average, which is also known as the advantage function. Like $M_t^Q$, however, this baseline does not have a closed-form expression. The baseline $B_t^V$ can be understood as averaging the result of $B_t^\text{self}$ (applied with the posterior mean reward metric) across many independent runs of the algorithm. The randomized baseline $B_t^\text{self}$ has higher variance, but can be calculated at much lower computational cost.

\seclabel{Implementation issues.}
If we are equipped with a simulator that can generate instances with full information, it is straightforward to compute the reward metrics and the baselines listed above (apart from the computational efficiency).
If we are running the algorithm in the real world situation, however, we may not be able to identify their values as we do not have an access to unrevealed information such as true model parameters $\theta$.
Nevertheless, in the Bayesian setting, we can overcome this issue by sampling the unobserved variables at the end of an episode:
For example, after completing an episode, we can sample $\tilde{\theta} \sim \Pscr(Y_T)$ as if we perform one more step of \textsc{TS}, and plug them into the formulas for reward metric or baseline.
This is valid since $\tilde{\theta}$ is identically distributed with the true parameters $\theta$ given the observations revealed in that episode, by the virtue of posterior distribution, and therefore the resulting gradient estimates $\tilde{G}$ will also be identically distributed with the true one $G$.

Note that any model mis-specification can lead to a bias of the gradient estimator.
More specifically, if the prior distribution or the reward distribution is mis-specified (e.g., the value of noise variance $\sigma_a^2$ is incorrect in Example \ref{ex:gaussian-MAB}), the reward metrics $M_t^\text{fin}$ and $M_t^\text{Bayes}$ will result in biased estimates. If the mean reward function $\mu( \cdot, \cdot )$ is incorrect, furthermore, all the reward metrics other than $M_t^\text{obs}$ will suffer from the bias.
We expect that the users can determine whether there is an bias during the training process and adopt a more robust metric if needed.

\subsection{Variance Comparison}

The variance of a gradient estimator is a crucial factor for the performance of policy
gradient. In this section, we provide an analysis than can provide theoretical comparisons between
estimators of the form \eqref{eq:gradient-estimator-general}, including many of the estimators in
Section~\ref{sec:reward-metric-choices}.

To begin, note that for an admissible estimator of the form \eqref{eq:gradient-estimator-general},
we have
\[
  \E\left[ G \right] = \E\left[ G^{M,B} \right]
  = \E\left[ \sum_{t=1}^T S_t \cdot \left( M_t - B_t \right) \right]
  = T\times \E\left[ \frac{1}{T} \sum_{t=1}^T  G^{M,B}_t
  \right]
  = \E\left[ T \times  G^{M,B}_\tau \right],
\]
where $G^{M,B}_t \defeq S_t \cdot (M_t - B_t)$, and $\tau \in [T]$ is a random time index that is
independently and uniformly distributed. Thus, given any admissible estimator $G^{M,B}$, we can
construct a related \emph{single time} estimator $T \times G^{M,B}_\tau $ that is also
unbiased. Loosely speaking, this estimator estimates the gradient based on a the impact of an action
taken at a single, randomly chosen decision epoch $\tau$, rather than considering all decision
epochs. Moreover, the simpler, single time estimator is more amenable to analysis.

In the next theorem, we further provide a comparison between two single time
gradient estimators in terms of the variance they induce.  For two square symmetric matrices $A$
and $B$, we say $A \preceq B$ if and only if $B - A$ is a positive semi-definite matrix. This
gives a partial ordering of symmetric matrices.
\begin{theorem}[Variance reduction] \label{thm:variance-reduction}
  Consider two reward metric and baseline pairs $(\underline{M}_t, \underline{B}_t)$ and $(\overline{M}_t, \overline{B}_t)$ that satisfy
  \begin{equation} \label{eq:variance-reduction-relationship}
    \underline{M}_t - \underline{B}_t =
    \E\left[ \left. \overline{M}_t - \overline{B}_t \right| \Gscr_t \right]
    , \quad \forall t \in [T],
  \end{equation}
  for some $\Gscr_t \supseteq \sigma( H_{t-1}, \tilde{\theta}_t )$.
  Let $\underline{G}_\tau$ and $\overline{G}_\tau$ be corresponding single time gradient estimators,
  respectively, i.e.,
  \[
    \underline{G}_\tau \defeq S_\tau \cdot \big( \underline{M}_\tau - \underline{B}_\tau \big)
    , \quad
    \overline{G}_\tau \defeq S_\tau \cdot \big( \overline{M}_\tau - \overline{B}_\tau \big)
    .
  \]
  Then, $\overline{G}_\tau$ exhibits a smaller variance than $\underline{G}_\tau$, in the sense that
  \begin{equation} \label{eq:variance-reduction-cov}
    \Cov[ \underline{G}_\tau ] \preceq \Cov[ \overline{G}_\tau ].
  \end{equation}
\end{theorem}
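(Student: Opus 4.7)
The plan is to reduce the comparison to a single application of the law of total covariance. The central observation is that the hypothesis \eqref{eq:variance-reduction-relationship} together with measurability of the score can be leveraged to write $\underline{G}_\tau$ as a conditional expectation of $\overline{G}_\tau$. From there, the conclusion is immediate.

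First, since $\tau$ is uniform on $[T]$ and independent of the history, and both $(\underline{M},\underline{B})$ and $(\overline{M},\overline{B})$ are admissible, Theorem~\ref{thm:unbiasedness} already gives $\E[\underline{G}_\tau] = \E[\overline{G}_\tau]$, so we only need to compare second moments. I would introduce an enlarged sigma-algebra $\Gscr_\tau' \defeq \sigma(\tau, \Gscr_\tau)$ (formally, the sigma-algebra generated by $\tau$ together with $\Gscr_t$ on the event $\{\tau = t\}$). Because $\Gscr_t \supseteq \sigma(H_{t-1}, \tilde\theta_t)$ for every $t$, and $S_t = \nabla_\lambda \log p_\lambda(\tilde\theta_t; H_{t-1})$ is $\sigma(H_{t-1}, \tilde\theta_t)$-measurable, the random variable $S_\tau$ is $\Gscr_\tau'$-measurable.

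Next I would pull $S_\tau$ out of the conditional expectation and apply the hypothesis on each event $\{\tau = t\}$:
\[
\E\bigl[\overline{G}_\tau \,\big|\, \Gscr_\tau'\bigr]
= S_\tau \cdot \E\bigl[\overline{M}_\tau - \overline{B}_\tau \,\big|\, \Gscr_\tau'\bigr]
= S_\tau \cdot \bigl(\underline{M}_\tau - \underline{B}_\tau\bigr)
= \underline{G}_\tau.
\]
(The independence of $\tau$ from the problem randomness ensures the conditional expectation reduces, on $\{\tau = t\}$, to $\E[\overline{M}_t - \overline{B}_t \mid \Gscr_t] = \underline{M}_t - \underline{B}_t$.) With this identity in hand, the law of total covariance yields
\[
\Cov[\overline{G}_\tau]
= \E\bigl[\Cov(\overline{G}_\tau \mid \Gscr_\tau')\bigr] + \Cov\bigl(\E[\overline{G}_\tau \mid \Gscr_\tau']\bigr)
= \E\bigl[\Cov(\overline{G}_\tau \mid \Gscr_\tau')\bigr] + \Cov[\underline{G}_\tau].
\]
The first term is a (Bochner) integral of positive semidefinite matrices and hence positive semidefinite, which establishes \eqref{eq:variance-reduction-cov}.

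The only delicate point is the bookkeeping around $\tau$: one must verify that the random time index does not interfere with either the pull-out of $S_\tau$ or the application of the hypothesis. This is routine since $\tau$ is independent of everything else and the hypothesis is stated uniformly in $t$, but it is where I would be most careful in a formal write-up. Everything else is a textbook Rao–Blackwell / law-of-total-covariance argument, which also makes clear why this result subsumes the classical variance reductions one expects from richer conditioning (e.g., replacing observed rewards by their conditional means, or subtracting a well-chosen baseline).
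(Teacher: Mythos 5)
Your argument is correct and rests on the same mechanism as the paper's proof: pull $S$ out of a conditional expectation (using $\Gscr_t \supseteq \sigma(H_{t-1},\tilde\theta_t)$), invoke the hypothesis to identify the result with $\underline{G}$, and conclude by positive semidefiniteness of the conditional covariance term in the law of total covariance. The only difference is organizational --- the paper first proves $\Cov[\underline{G}_t] \preceq \Cov[\overline{G}_t]$ for each fixed $t$ and then applies the law of total covariance a second time conditioning on $\tau$, whereas you fold $\tau$ into the enlarged $\sigma$-algebra $\Gscr_\tau'$ and do it in one step, which is a clean consolidation (and, as a bonus, your identity $\E[\overline{G}_\tau \mid \Gscr_\tau'] = \underline{G}_\tau$ already yields equality of means without appealing to Theorem~\ref{thm:unbiasedness}).
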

\begin{proof}
  Fix $t \in [T]$. By the law of total covariance and conditioning on $\Gscr_t$,
  \begin{equation}\label{eq:cov-1}
    \begin{split}
      \Cov[ \overline{G}_t ]
      & = \Cov\left[ \E\big( \overline{G}_t \big| \Gscr_t \big) \right]
      + \E\left[ \Cov\big( \overline{G}_t \big|  \Gscr_t \big) \right]
      \\
      & \succeq \Cov\left[ \E\big( \overline{G}_t \big| \Gscr_t \big) \right]
      \\
      &= \Cov\left[ \mathbb{E}\left( \left. S_t \cdot \big( \overline{M}_t - \overline{B}_t
            \big) \right| \Gscr_t \right) \right]
      \\
      & = \Cov\left[ S_t \cdot \mathbb{E}\left( \left.
            \overline{M}_t - \overline{B}_t \right| \Gscr_t \right) \right]
      \\
      &= \Cov\left[ S_t \cdot (\underline{M}_t - \underline{B}_t) \right]
      \\
      & = \Cov[ \underline{G}_t ],
    \end{split}
  \end{equation}
  where the inequality in the second step follows from the fact that every covariance matrix is
  positive semi-definite.

  Now, note that
  \[
    \begin{split}
      \E\left[ \left. \overline{G}_\tau \right| \tau \right]
      & =
      \E\left[ \left.   S_\tau \cdot \E\left[ \left. \overline{M}_\tau - \overline{B}_\tau
              \right|
              \Gscr_\tau
            \right] \right| \tau \right]
      =
      \E\left[ \left.    S_\tau \cdot \left( \underline{M}_\tau - \underline{B}_\tau \right)
        \right| \tau \right]
      = \E\left[ \left. \underline{G}_\tau \right| \tau \right].
    \end{split}
  \]
  Then, applying the law of total covariance again, this time conditioning on
  $\tau$,
  \[
    \begin{split}
      \Cov[ \overline{G}_\tau ]
      & = \Cov\left[ \E\big( \overline{G}_\tau \big| \tau \big) \right]
      + \E\left[ \Cov\big( \overline{G}_\tau \big|  \tau \big) \right]
      \\
      & = \Cov\left[ \E\big( \underline{G}_\tau \big| \tau \big) \right]
      + \E\left[ \Cov\big( \overline{G}_\tau \big|  \tau \big) \right]
      \\
      & \succeq  \Cov\left[ \E\big( \underline{G}_\tau \big| \tau \big) \right]
      + \E\left[ \Cov\big( \underline{G}_\tau \big|  \tau \big) \right]
      \\
      &  = \Cov[ \underline{G}_\tau ].
    \end{split}
  \]
  Here, the inequality follows from \eqref{eq:cov-1}.
  This concludes the proof.
\end{proof}

Theorem~\ref{thm:variance-reduction} provides a pairwise comparison of two single time gradient
estimators (i.e., $\overline{G}_\tau$ and $\underline{G}_\tau$), when their reward metrics and baselines
are related by \eqref{eq:variance-reduction-relationship}.  Ideally we would like a comparison
between the variance of the original gradient estimators (i.e.,
$\Cov\big[ \sum_{t=1}^T \underline{G}_t \big]$ and
$\Cov\big[ \sum_{t=1}^T \overline{G}_t \big]$). However, this is challenging due to the
interdependence across time between the score functions and the reward metrics. Nevertheless, we
believe that Theorem~\ref{thm:variance-reduction} is informative, and the ordering it implies is
consistent with the numerical performance results we will see in Section~\ref{sec:numeric}.

Theorem~\ref{thm:variance-reduction} implies that the reward metric based on the smaller
information set (i.e., through more averaging) produces a more precise gradient estimator than one
based on the larger information set (i.e., with less averaging). This is the same insight that
drives the Rao-Blackwell theorem.

In the development of the reward metrics in Section~\ref{sec:reward-metric-choices}, we have
argued that some reward metrics are motivated from the others via Rao-Blackwellization.  In fact,
the relationship \eqref{eq:variance-reduction-relationship} holds among the reward metrics
$M_t^\text{obs}$, $M_t^\text{mean}$, $M_t^\text{fin}$, and $M_t^Q$ (not including
$M_t^\text{Bayes}$). Indeed, an application of Theorem~\ref{thm:variance-reduction} immediately
yields the following ordering among the gradient estimators:
\begin{corollary}
\[
  \Cov[ G_\tau^{M^\textup{obs},B} ] \succeq \Cov[ G_\tau^{M^\textup{mean},B} ] \succeq
  \Cov[ G_\tau^{M^\textup{fin},B} ] \succeq \Cov[ G_\tau^{M^Q,B} ],
\]
for any choice of baseline $B$ from $B^\textup{null}$, $B^\textup{oracle}$, $B^\textup{self}$, and
$B^V$. Here, note that the baselines $B_t^\textup{oracle}$ and $B_t^\textup{self}$ require a coupled
reward metric. We assume they are coupled to the corresponding reward metric in use in each
estimator.
\end{corollary}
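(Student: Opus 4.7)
The plan is to derive the claim by three successive applications of Theorem~\ref{thm:variance-reduction}, one per adjacent pair in the chain $(M^{\text{obs}}, M^{\text{mean}})$, $(M^{\text{mean}}, M^{\text{fin}})$, $(M^{\text{fin}}, M^Q)$. Once each pair yields a semidefinite inequality of the form $\Cov[\underline{G}_\tau] \preceq \Cov[\overline{G}_\tau]$, transitivity of $\preceq$ chains the three into the full corollary.

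For each pair I would exhibit a filtration $\Gscr_t \supseteq \sigma(H_{t-1}, \tilde{\theta}_t)$ for which $\underline{M}_t - \underline{B}_t = \E[\overline{M}_t - \overline{B}_t \,|\, \Gscr_t]$, where $\underline{B}$ and $\overline{B}$ are the baselines coupled to $\underline{M}$ and $\overline{M}$, respectively. For $(M^{\text{obs}}, M^{\text{mean}})$, take $\Gscr^1_t$ to be the $\sigma$-field generated by $(H_{t-1}, \tilde{\theta}_t, \theta, A_{t:T}, A^\star, \tilde{A}_{t:T})$; under this, each reward $R_{A_s,s}$ reduces in conditional expectation to $\mu(\theta, A_s)$, converting $M^{\text{obs}}_t$ into $M^{\text{mean}}_t$, and the same averaging sends $B^{\text{oracle},\text{obs}}_t = \sum_{s=t}^T R_{A^\star,s}$ into $B^{\text{oracle},\text{mean}}_t = \sum_{s=t}^T \mu(\theta, A^\star)$ and analogously for $B^{\text{self}}$. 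For $(M^{\text{mean}}, M^{\text{fin}})$, take $\Gscr^2_t = \sigma(H_{t-1}, \tilde{\theta}_t, A_{t:T}, A^\star, \tilde{A}_{t:T}, \{\Psi_{a,t}\}_{a \in \Ascr})$, where $\Psi_{a,t} \defeq (r(\theta, a, \xi_{a,s}))_{s \in [t,T]}$ denotes the counterfactual rewards from pulling arm $a$ throughout $[t, T]$; the arm-independence hypothesis already required by $M^{\text{fin}}$ ensures that $\E[\mu(\theta, a) \,|\, \Gscr^2_t] = \hat{\mu}_{I,t}(a)$ for every arm $a$, yielding $\E[M^{\text{mean}}_t \,|\, \Gscr^2_t] = M^{\text{fin}}_t$ and the analogous identity for the coupled baselines. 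For $(M^{\text{fin}}, M^Q)$, take $\Gscr^3_t = \sigma(H_{t-1}, \tilde{\theta}_t)$; since $A_t$ is $\sigma(\tilde{\theta}_t)$-measurable and all future randomness is conditionally independent of $\tilde{\theta}_t$ given $(H_{t-1}, A_t)$, composing the previous Rao-Blackwellizations with the tower property gives $\E[M^{\text{fin}}_t \,|\, \Gscr^3_t] = \E[M^{\text{obs}}_t \,|\, H_{t-1}, A_t] = M^Q_t$, with the same identity carrying over to each coupled baseline.

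In every step the null baseline $B^{\text{null}} = 0$ and the value-function baseline $B^V_t$ are $\sigma(H_{t-1})$-measurable, hence $\Gscr^i_t$-measurable, so the required identity trivializes. For $B^{\text{oracle}}$ and $B^{\text{self}}$, the coupled structure means the very same conditional expectation that transforms the metric also transforms the baseline, which is precisely why $A^\star$ and $\tilde{A}_{t:T}$ were included in $\Gscr^1_t$ and $\Gscr^2_t$. The main obstacle is the second step: the arm-independence assumption must be wielded carefully so that conditioning on the counterfactual roll-outs $\{\Psi_{a,t}\}$ of other arms, together with the identity of $A^\star$, leaves the conditional posterior mean of $\mu(\theta, a)$ exactly at $\hat{\mu}_{I,t}(a)$. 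The first and third steps reduce to routine tower-property manipulations once the appropriate filtrations have been chosen.
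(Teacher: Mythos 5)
Your overall strategy---three applications of Theorem~\ref{thm:variance-reduction} to adjacent pairs, chained by transitivity of $\preceq$---is exactly what the paper intends; the paper itself offers no more than the assertion that \eqref{eq:variance-reduction-relationship} holds along the chain and that the corollary is then immediate. The issue is that the explicit conditioning $\sigma$-fields you supply do not actually make the required identities true, and the first link fails in a way that cannot be repaired by enlarging or shrinking $\Gscr^1_t$. To have $M^{\text{mean}}_t=\sum_{s=t}^T\mu(\theta,A_s)$ be $\Gscr^1_t$-measurable you must put (at least) $\theta$ and the realized future actions $A_{t:T}$ into $\Gscr^1_t$; but for an adaptive policy $A_{s+1}$ is a function of $R_{A_s,s}$, so conditioning on $A_{t:T}$ leaks information about the reward noise $\xi_{t:T}$ and $\E\left[R_{A_s,s}\,\middle|\,\theta,A_{t:T},H_{t-1},\tilde\theta_t\right]\neq\mu(\theta,A_s)$ in general. (Concretely: with $T=2$, $A_1\equiv 1$ and $A_2=1$ iff $R_{1,1}>c$, on the event $\{A_2=1\}$ one gets $\E[R_{1,1}\mid\theta,A_{1:2}]=\E[R_{1,1}\mid\theta,R_{1,1}>c]\neq\mu(\theta,1)$.) So $\E\left[M^{\text{obs}}_t\,\middle|\,\Gscr^1_t\right]\neq M^{\text{mean}}_t$ pointwise, and the hypothesis of Theorem~\ref{thm:variance-reduction} is not verified for the pair $(M^{\text{obs}},M^{\text{mean}})$. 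The conclusion is still plausible (the two estimators differ by $S_t\sum_s(R_{A_s,s}-\mu(\theta,A_s))$, a sum of martingale increments), but establishing the covariance ordering requires an orthogonality argument for the cross term, not a single Rao--Blackwell conditioning.

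Two further steps need repair. In $\Gscr^2_t$ you include $A^\star=\argmax_a\mu(\theta,a)$ (and the self-play actions); but conditioning on the event that a particular arm is the best conveys information about $\mu(\theta,a)$ beyond the finite-sample observations in $\Psi_{a,t}$, so $\E\left[\mu(\theta,a)\,\middle|\,\Gscr^2_t\right]\neq\hat\mu_{I,t}(a)$ and the identity $\E[M^{\text{mean}}_t\mid\Gscr^2_t]=M^{\text{fin}}_t$ breaks exactly in the oracle- and self-play-baseline cases you need it for. Similarly, in the last link you take $\Gscr^3_t=\sigma(H_{t-1},\tilde\theta_t)$, which works cleanly for $B^{\text{null}}$ and $B^V$ (both are $\sigma(H_{t-1})$-measurable, and $\E[M^{\text{fin}}_t\mid H_{t-1},\tilde\theta_t]=M^Q_t$ follows from admissibility plus the conditional independence of $\tilde\theta_t$ and the future given $(H_{t-1},A_t)$); but the $M^Q$-coupled oracle and self-play baselines are not $\sigma(H_{t-1},\tilde\theta_t)$-measurable, so $M^Q_t-B_t$ cannot equal a conditional expectation given $\Gscr^3_t$, and enlarging $\Gscr^3_t$ to include $\theta$ destroys the identity $\E[M^{\text{fin}}_t\mid\Gscr^3_t]=M^Q_t$. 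In short: your proposal is faithful to the paper's (unwritten) argument, but the nested-conditioning identities need to be stated and verified baseline by baseline, and the $M^{\text{obs}}\succeq M^{\text{mean}}$ link requires a genuinely different argument than Theorem~\ref{thm:variance-reduction} as applied here.
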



\section{Numerical Experiments} \label{sec:numeric}

In this section, we report the simulation results of the policy gradient optimization of Thompson
sampling.  We aim to illustrate the flexibility of our proposed framework as a meta-learning
platform for bandit tasks, compare the gradient estimators with different choices of reward metric
and baseline, and highlight the performance of optimized sampling policies in a comparison with the
other state-of-the-art algorithms.

\seclabel{Setup.}
We consider Gaussian multi-armed bandit (MAB) problems, introduced in Example \ref{ex:gaussian-MAB}, for which we implement \textsc{TS} with parameterized posterior reshaping, described in Example \ref{ex:posterior-reshaping-Gaussian}.
To highlight the improvement over the na\"ive \textsc{TS}, our experiments include the following configurations:
\begin{enumerate}
        \item Gaussian MAB with 10 arms ($K=10$) and 500 time periods ($T=500$), where all arms have the same prior distribution and the same noise variance.
                This is a typical setting that has been studied in many prior works.
        \item Gaussian MAB with heteroscedastic reward distributions, where we are given 5 arms ($K=5$) with very different noise variances and 50 time periods ($T=50$).
                Since each arm requires a different amount of effort to learn its unknown mean, it is important to incorporate information about the noise variances into the decision making, which standard \textsc{TS} does not do.
        \item Gaussian MAB with an excessive number of arms, where we are given 20 arms ($K=20$) with identical priors and 20 time periods ($T=20$). In this setup, there is no hope of discovering the true optimal arm. Nevertheless, standard \textsc{TS} continues to select arms that have never been tried throughout the entire time horizon, which is very wasteful exploration.
\end{enumerate}
Note that in all of these settings we have adopted the same parameterization of \textsc{TS}.
This is to verify that our proposed framework achieves the goal of meta-learning:
The policy gradient procedure finds the choice of meta-parameters $\lambda \in \mathbb{R}^{4K}$
from Example~\ref{ex:posterior-reshaping-Gaussian} that is optimized for each of the bandit
settings, resulting in the algorithm that is trained to exploit the structure in each setting and
performs no worse than the standard version of \textsc{TS}. We highlighted that the optimized
behavior differs substantially across settings and at times differs substantially from
\textsc{TS}.

\seclabel{Training.}  We implement the policy gradient algorithm based on the gradient estimator
\eqref{eq:gradient-estimator-general} with different combinations of reward metric $M$ and
baseline $B$.  In each policy gradient iteration, we compute the batch gradient, i.e., the average
gradient measured across a set of independently generated bandit instances, where the batch size
(the number of instances) ranges from 1,000 to 5,000 across the settings. The Adam optimizer
\citep{kingma2014adam} is used to perform the gradient ascent steps.

The random generation of Gaussian MAB instances is done according to the model described in
Example \ref{ex:gaussian-MAB}.  To facilitate an accurate comparison between the gradient
estimators, the estimators share all the randomness in the instance generation and the random
action selection. That is, in the notation of \eqref{eq:explicit-randomness}, the same
realizations of noise variables $(\epsilon_t)$ and $(\xi_t)$ are used for the simulation of
different policy gradient estimators.

\seclabel{Evaluation.}
As a suboptimality measure of a bandit algorithm, we utilize the (Bayesian) regret defined as follows:
\begin{equation} \label{eq:regret}
        \textsc{Regret}(\pi) \defeq \E\left[ \sum_{t=1}^T \max_{a \in \Ascr} \{ \mu( \theta, a ) \} - \mu( \theta, A_t ) \right],
\end{equation}
which is measured via sample average approximation in our simulation.
When computing the gradient estimator during the training process, we obtain as a side product the regret that the algorithm incurs in each training batch, and we report this trajectory of regret as a learning curve of the policy gradient optimization.
We naturally expect that the regret decreases as training proceeds.
Finally, we measure the regret of the trained policies (and the other bandit algorithms listed below) on the evaluation batch, which is a set of instances generated independently of the training batches.
As done in training, the same set of instances are used for evaluating all the policies so as to facilitate accurate comparisons among them.

\seclabel{Competing bandit algorithms.}  We consider the state-of-the-art bandit algorithms that
are suitable for a Bayesian setting: the Bayesian upper confidence bound
\citep{kaufmann2012bayesian} (\textsc{Bayes-UCB}, with a quantile of $1 - \tfrac{1}{t}$),
information-directed sampling \citep{russo2018learning} (\textsc{IDS}), and the optimistic Gittins
index\footnote{There are two free parameters in \textsc{OGI}. We use a one-step look-ahead and a
  discount factor of $\gamma_t = 1 - \tfrac{1}{t}$, which was the primary focus of
  \citet{gutin2016optimistic}.} \citep{gutin2016optimistic} (OGI). We compare the performance of
the trained \textsc{TS} policies with these algorithms.

\seclabel{Implementation.}
All the code is written in Python, and the training module is implemented using Tensorflow to utilize the automatic gradient calculation and the Adam optimizer.
We use 64-bits floating numbers for computation of gradient estimator.

\subsection{Gaussian MAB in a Standard Setting ($K=10, T=500$)}
We first report the result for Gaussian MAB with 10 arms and 500 time periods.
More specifically, we are given ten independent arms with identical prior distributions: For each arm $a = 1, \ldots, K$ and time $t = 1, \ldots, T$, we assume that
\begin{equation}
        \theta_a \sim \Nscr( 0, 1^2 )
        , \quad
        R_{a,t} | \theta_a \sim \Nscr( \theta_a, 1^2 ).
\end{equation}
This setup has been also examined in the prior literature \citep{gutin2016optimistic,
  russo2018learning}.

For policy gradient optimization of \textsc{TS} with parameterized posterior reshaping, we adopt the various combinations of reward metric $M$ and baseline $B$ for the gradient estimator $G^{M,B}$.
The initial values for the meta-parameters $\lambda$ are chosen in the way that the corresponding policy is identical to the standard \textsc{TS}.
The training batch size is set to 5,000 and the learning rate for Adam optimizer is set to 0.01.

Figure \ref{fig:numeric-gauss-long} shows the learning curves obtained in our simulated training, and Table \ref{tab:numeric-gauss-long} reports the performance of the trained \textsc{TS} policies as well as the other algorithms being compared.
In every combination of reward metric and baseline, we observe a steady improvement in performance over the course of the training process (starting from the standard \textsc{TS}).
The training performance largely depends on the choice of baseline: with baseline $B^\text{oracle}$ or $B^\text{self}$ the algorithm shows an impressive progress, catching the state-of-the-art algorithms within 300 policy gradient iterations and ending up with policies that improve over the standard \textsc{TS} by 23\% in terms of regret.

\begin{figure}[H]
        \centering
        \includegraphics[width=0.8\linewidth]{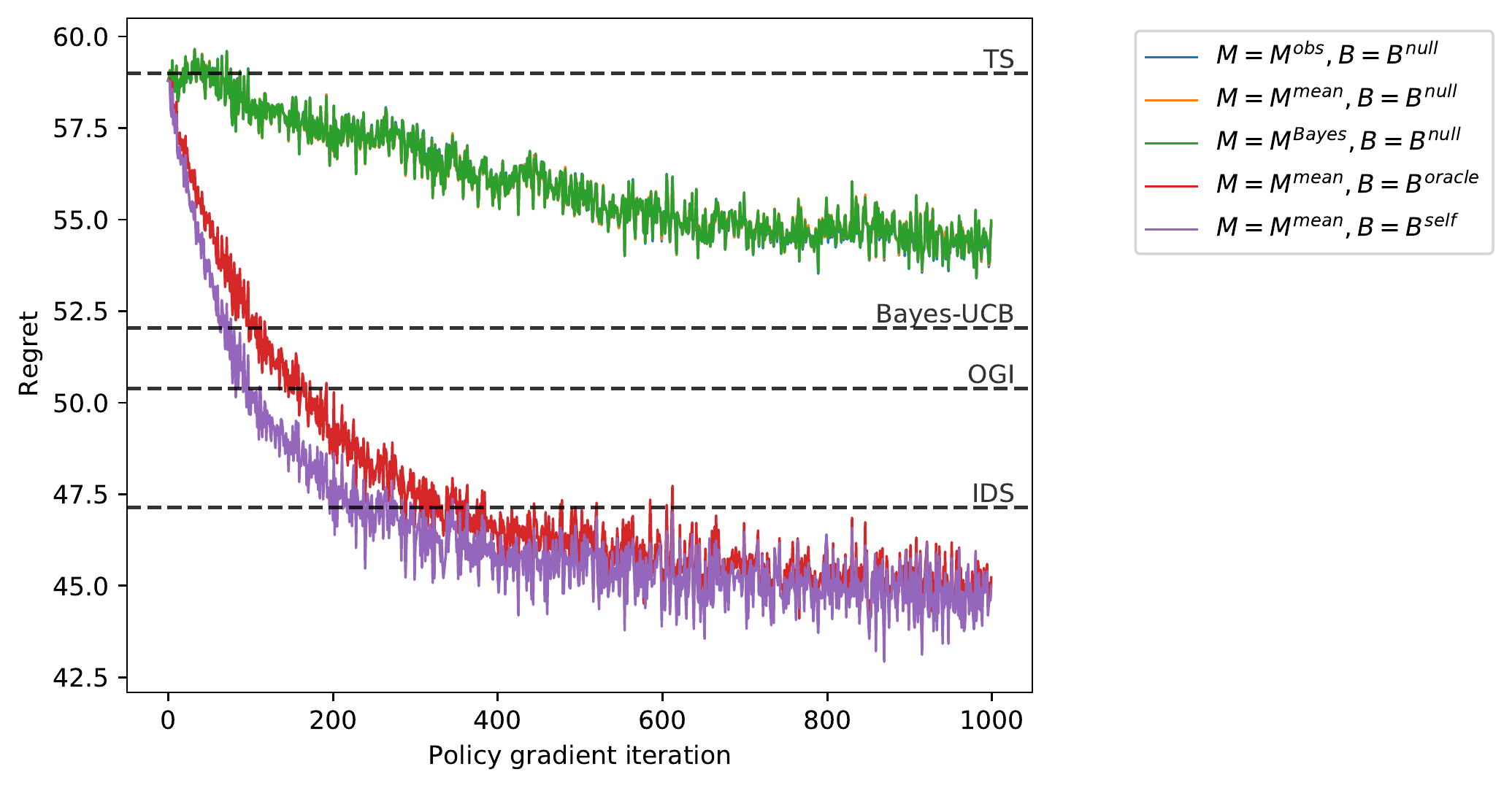}
        \caption{
                Learning curves of parameterized Thompson sampling trained for Gaussian MAB with 10 arms and 500 time periods.
                A curve shows the progress of policy gradient optimization based on a particular choice of reward metric $M$ and baseline $B$ for the gradient estimator $G^{M,B}$, defined in \eqref{eq:gradient-estimator-general}.
                The $k^\text{th}$ data point in each curve reports the average regret on the $k^\text{th}$ training batch, which contains 5,000 independent instances.
                The dashed horizontal lines represent the performance of the other algorithms measured with the evaluation batch containing 20,000 instances (also reported in Table \ref{tab:numeric-gauss-long}).
                In this plot, the learning curves of $(M^\text{obs}, B^\text{null})$, $(M^\text{mean}, B^\text{null})$, and $(M^\text{Bayes}, B^\text{null})$ precisely coincide.
        }
        \label{fig:numeric-gauss-long}
\end{figure}

        \begin{table}[H]
        \centering
        \begin{tabular}{*4c}
        \toprule
        \thead{Algorithm} & \thead{Reward metric} & \thead{Baseline} & \thead{Regret (s.e.)} \\
        \midrule
        \multirow{5}{*}{ Trained \textsc{TS} }
                & $M^\text{obs}$ & $B^\text{null}$ & 54.411 (0.206) \\
                & $M^\text{mean}$ & $B^\text{null}$ & 54.530 (0.203) \\
                & $M^\text{Bayes}$ & $B^\text{null}$ & 54.591 (0.208) \\
                & $M^\text{mean}$ & $B^\text{oracle}$  & 45.247 (0.302) \\
                & $M^\text{mean}$ & $B^\text{self}$ & \textbf{45.099} (0.320) \\
         \midrule
         Na\"ive \textsc{TS} & -- & -- & 58.999 (0.191) \\
         \textsc{Bayes-UCB} & -- & -- & 52.038 (0.186) \\
         \textsc{OGI} & -- & -- & 50.381 (0.348) \\
         \textsc{IDS} & -- & -- & 47.135 (0.335) \\
        \bottomrule
        \end{tabular}
        \caption{
                Performance of the algorithms for Gaussian MAB with 10 arms and 500 time periods.
                Each trained \textsc{TS} uses the meta-parameters that are obtained from training procedure, i.e., the ones found at the end of 1,000 iterations of batched policy gradient ascent (Figure \ref{fig:numeric-gauss-long}).
                The performance is measured in regret, defined in \eqref{eq:regret}, and computed via sample average approximation over 20,000 independent instances, and reported with the standard error.
                The best result is emphasized with bold letters.
                }
        \label{tab:numeric-gauss-long}
        \end{table}

\subsection{Gaussian MAB with Heteroscedastic Arms ($K=5, T=50$)}
We now explore a different configuration of the Gaussian MAB under which the na\"ive \textsc{TS} performs particularly poorly.
We consider five arms that have significantly different noise variances: For each arm $a=1,\ldots,5$ and time $t=1,\ldots,50$, we assume that
\begin{equation}
        \theta_a \sim \Nscr( 0, 1^2 ), \quad
        R_{a,t} | \theta_a \sim \Nscr( \theta_a, \sigma_a^2 ), \quad
        \text{where} \quad
        \sigma_{1:5}^2 \defeq ( 0.1,~ 0.4,~ 1,~ 4,~ 10 ).
\end{equation}
Note that it is crucial for the algorithms to consider the heterogeneity in the reward variances since the variance $\sigma_a^2$ determines how much the decision maker can learn about the unknown mean reward $\theta_a$ within a finite number of observations: in order for the posterior distribution to concentrate so as to have the standard deviation of 0.1, for example, a single observations is enough for arm 1 whereas 100 and 10,000 observations are required for arm 3 and arm 5, respectively. This is especially important when the time horizon is short, as in this case.

We use the training batches of size 1,000 for gradient estimation, and the Adam optimizer with learning rate of 0.05 for policy gradient, and the evaluation batch of size 10,000 for evaluation. While every combination of reward metric and baseline shows a very stable progress throughout the policy gradient procedure, as shown in Figure \ref{fig:numeric-gauss-hetero}, we observe that the baseline $B^\text{self}$ works slightly better than the baseline $B^\text{oracle}$, and so does $B^\text{oracle}$ than $B^\text{null}$.

The evaluation results are shown in Table \ref{tab:numeric-gauss-hetero}.
We immediately observe that na\"ive \textsc{TS} and \textsc{Bayes-UCB} particularly perform poorly as they make decisions based only on the posterior at each moment without incorporating the noise variances into consideration.
As the results show, by optimizing posterior reshaping parameters, we can make \textsc{TS} to trade off exploitation and exploration much more precisely, so that we can achieve a surprising improvement over \textsc{TS} by 35\%--40\%.

\begin{figure}[H]
        \centering
        \includegraphics[width=0.8\linewidth]{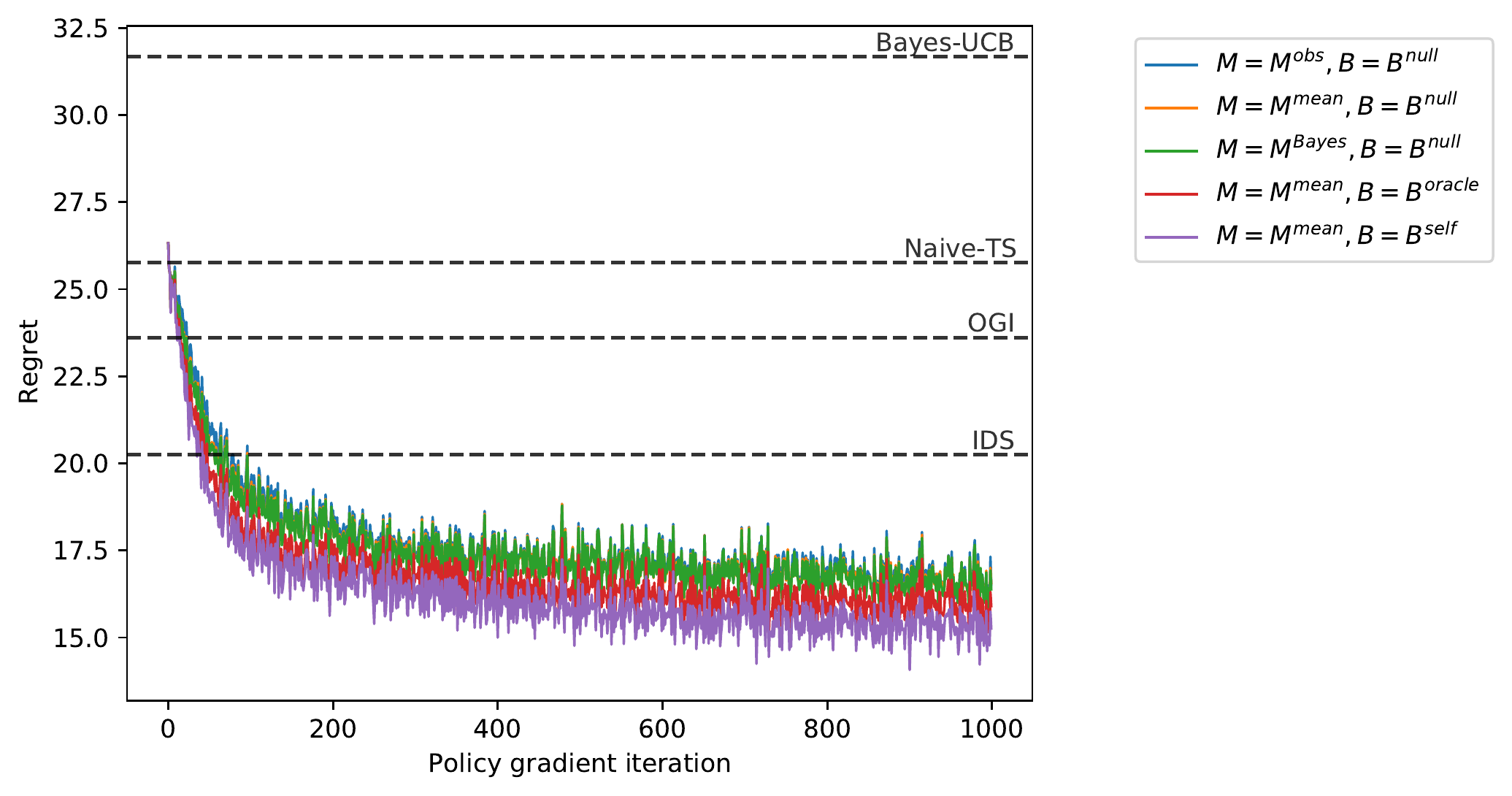}
        \caption{
                Learning curves of parameterized Thompson sampling trained for Gaussian MAB with heteroscedastic arms ($K=5, T=50, \sigma_{1:5}^2 = ( 0.1, 0.4, 1, 4, 10 )$).
                A curve shows the progress of policy gradient optimization based on a particular choice of reward metric $M$ and baseline $B$ for the gradient estimator $G^{M,B}$, defined in \eqref{eq:gradient-estimator-general}.
                The $k^\text{th}$ data point in each curve reports the average regret on the $k^\text{th}$ training batch, which contains 1,000 independent instances.
                The dashed horizontal lines represent the performance of the other algorithms measured with the evaluation batch containing 10,000 instances (also reported in Table \ref{tab:numeric-gauss-hetero}).
                In this plot, the learning curves of $(M^\text{obs}, B^\text{null})$, $(M^\text{mean}, B^\text{null})$, and $(M^\text{Bayes}, B^\text{null})$ precisely coincide.
        }
        \label{fig:numeric-gauss-hetero}
\end{figure}

        \begin{table}[H]
        \centering
        \begin{tabular}{*4c}
        \toprule
        \thead{Algorithm} & \thead{Reward metric} & \thead{Baseline} & \thead{Regret (s.e.)} \\
        \midrule
        \multirow{5}{*}{ Trained \textsc{TS} }
                & $M^\text{obs}$ & $B^\text{null}$ & 16.561 (0.204) \\
                & $M^\text{mean}$ & $B^\text{null}$ & 16.582 (0.202) \\
                & $M^\text{Bayes}$ & $B^\text{null}$ & 16.492 (0.202) \\
                & $M^\text{mean}$ & $B^\text{oracle}$  & 15.959 (0.203) \\
                & $M^\text{mean}$ & $B^\text{self}$ & \textbf{15.310} (0.198) \\
         \midrule
         Na\"ive \textsc{TS} & -- & -- & 25.768 (0.156) \\
         \textsc{Bayes-UCB} & -- & -- & 31.677 (0.254) \\
         \textsc{OGI} & -- & -- & 23.614 (0.224) \\
         \textsc{IDS} & -- & -- & 20.249 (0.202) \\
        \bottomrule
        \end{tabular}
        \caption{
                Performance of the algorithms for Gaussian MAB with heteroscedastic arms.
                Each trained \textsc{TS} uses the meta-parameters found at the end of 1,000 iterations of batched policy gradient ascent (Figure \ref{fig:numeric-gauss-hetero}).
                The performance is measured in regret, defined in \eqref{eq:regret}, and computed via sample average approximation over 10,000 independent instances, and reported with the standard error.
                The best result is emphasized with bold letters.
                }
        \label{tab:numeric-gauss-hetero}
        \end{table}

\subsection{Gaussian MAB with an Excessive Number of Arms ($K=20, T=20$)}\label{subsec: many arms}
We finally investigate Gaussian MAB with an excessive number of arms, i.e., too many arms compared to the length of time horizon.
More specifically, we consider 20 arms and 20 time periods: For each arm $a = 1, \ldots, 20$ and time $t = 1, \ldots, 20$, we assume that
\begin{equation}
        \theta_a \sim \Nscr( 0, 1^2 )
        , \quad
        R_{a,t} | \theta_a \sim \Nscr( \theta_a, 1^2 ).
\end{equation}

This setup is motivated from \citet{russo2018satisficing} in which the authors posit an extreme example where \textsc{TS} faces an infinite number of arms with identical priors.
In such an example, \textsc{TS} keeps pulling a new arm throughout the entire process, since with zero probability the same arm gets the largest sampled mean $\tilde{\theta}_{a,t}$ more than once: As a result, \textsc{TS} does not utilize any information obtained from the past pulls, and always earns the prior mean $\mathbb{E}[ \theta_a ]$ in expectation at each time.
We aim to see whether \textsc{TS} can resolve this over-exploration issue if optimized via policy gradient.

As in the previous setup, we use the training batches of size 1,000 and the learning rate of 0.05 for Adam optimizer, and the evaluation batch of size 10,000 for evaluation.

The simulation results are reported in Figure \ref{fig:numeric-gauss-many} and Table \ref{tab:numeric-gauss-many}.
As expected, the na\"ive \textsc{TS} exhibits an extremely poor performance in this setup.
At the end of the training process, we observe that all trained algorithms have almost identical performance regardless of the choice of reward metric and baseline in their gradient estimation.
During the initial phase of training (i.e., during the first 300 iterations), in contrast, we can observe that the baseline $B^\text{null}$ performs better than the baseline $B^\text{oracle}$.
This is in contrast with the heteroscedastic noise example. The intuitive reason is that, in the current setup, the oracle performance does not provide a (nearly) attainable benchmark for an adaptive algorithm.   

Figure \ref{fig:numeric-gauss-many-count} visualizes how the parameterized \textsc{TS} is improved over the course of training. It shows the distribution of pulls that each arm gets, measured at the beginning, middle, and end of the training.
At the beginning, since the initial values for meta-parameters are chosen to yield the standard version of \textsc{TS}, it allocates the pulls evenly across the arms, i.e., one pull per one arm in average.
As training proceeds, we can observe that the distribution becomes more skewed, i.e., the algorithm effectively ignored some arms as it realizes that it is wasteful to explore all of the arms.
The set of ignored arms are randomly determined during the course of policy gradient optimization: While not reported here, across the choices of reward metric and baseline, the shape of the distribution looks alike but the ordering of arms in the distribution is observed to be different.

\begin{figure}[H]
        \centering
        \includegraphics[width=0.8\linewidth]{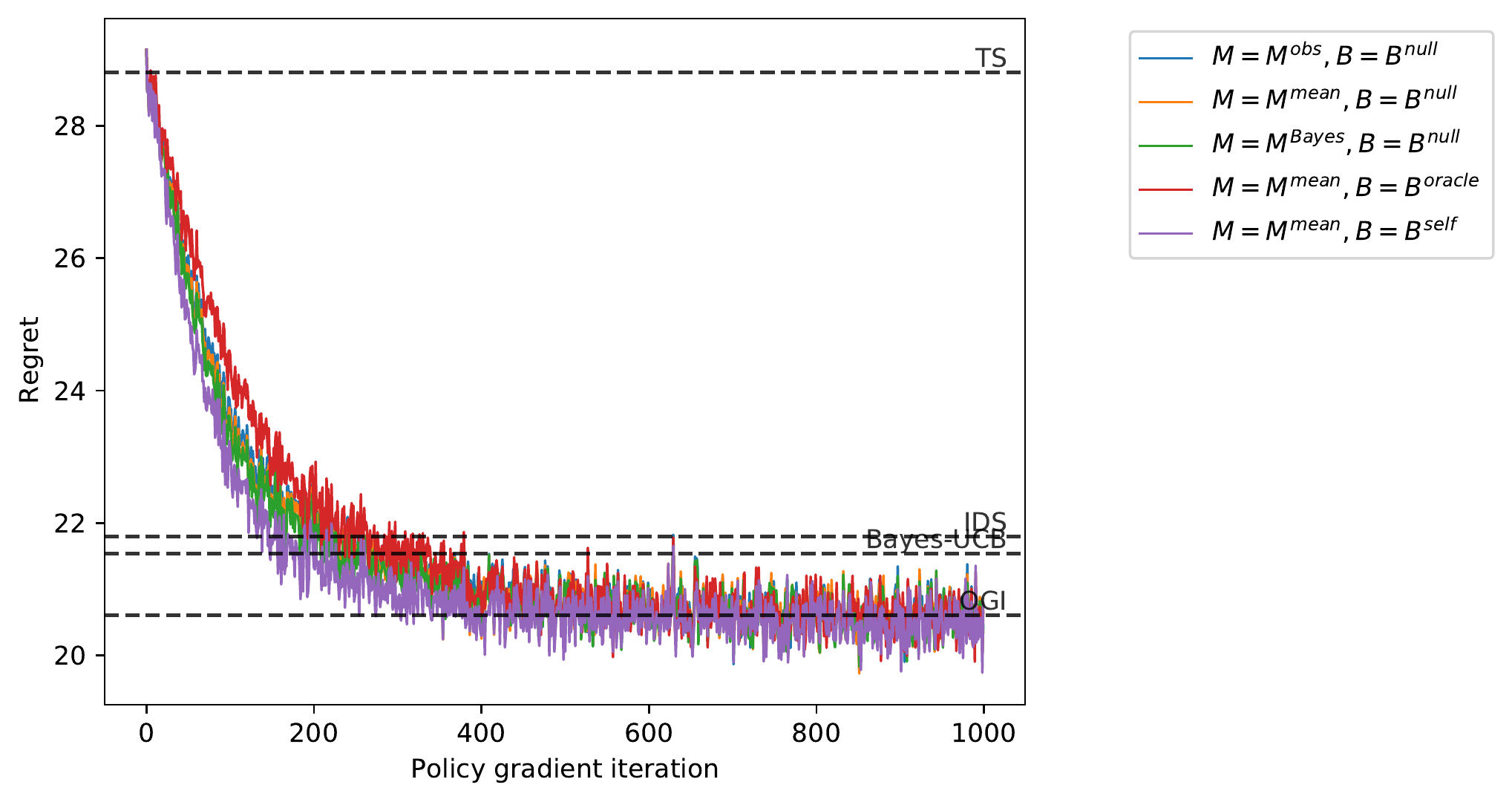}
        \caption{
                Learning curves of parameterized Thompson sampling trained for Gaussian MAB with an excessive number of arms ($K=20, T=20$).
                A curve shows the progress of policy gradient optimization based on a particular choice of reward metric $M$ and baseline $B$ for the gradient estimator $G^{M,B}$, defined in \eqref{eq:gradient-estimator-general}.
                The $k^\text{th}$ data point in each curve reports the average regret on the $k^\text{th}$ training batch, which contains 1,000 independent instances.
                The horizontal lines represent the performance of the other algorithms measured with the evaluation batch containing 10,000 instances (also reported in Table \ref{tab:numeric-gauss-many}).
                In this plot, the learning curves of $(M^\text{obs}, B^\text{null})$, $(M^\text{mean}, B^\text{null})$, and $(M^\text{Bayes}, B^\text{null})$ precisely coincide.
        }
        \label{fig:numeric-gauss-many}
\end{figure}

        \begin{table}[H]
        \centering
        \begin{tabular}{*4c}
        \toprule
        \thead{Algorithm} & \thead{Reward metric} & \thead{Baseline} & \thead{Regret (s.e.)} \\
        \midrule
        \multirow{5}{*}{ Trained \textsc{TS} }
                & $M^\text{obs}$ & $B^\text{null}$ & 20.625 (0.127) \\
                & $M^\text{mean}$ & $B^\text{null}$ & 20.541 (0.126) \\
                & $M^\text{Bayes}$ & $B^\text{null}$ & 20.591 (0.127) \\
                & $M^\text{mean}$ & $B^\text{oracle}$  & \textbf{20.348} (0.126) \\
                & $M^\text{mean}$ & $B^\text{self}$ & 20.421 (0.128) \\
         \midrule
         Na\"ive \textsc{TS} & -- & -- & 28.802 (0.097) \\
         \textsc{Bayes-UCB} & -- & -- & 21.537 (0.124) \\
         \textsc{OGI} & -- & -- & 20.604 (0.126) \\
         \textsc{IDS} & -- & -- & 21.799 (0.118) \\
        \bottomrule
        \end{tabular}
        \caption{
                Performance of the algorithms for Gaussian MAB with an excessive number of arms.
                Each trained \textsc{TS} uses the meta-parameters found at the end of 1,000 iterations of batched policy gradient ascent (Figure \ref{fig:numeric-gauss-many}).
                The performance is measured in regret, defined in \eqref{eq:regret}, and computed via sample average approximation over 10,000 independent instances, and reported with the standard error.
                The best result is emphasized with bold letters.
                }
        \label{tab:numeric-gauss-many}
        \end{table}

\begin{figure}[H]
        \centering
        \includegraphics[width=0.8\linewidth]{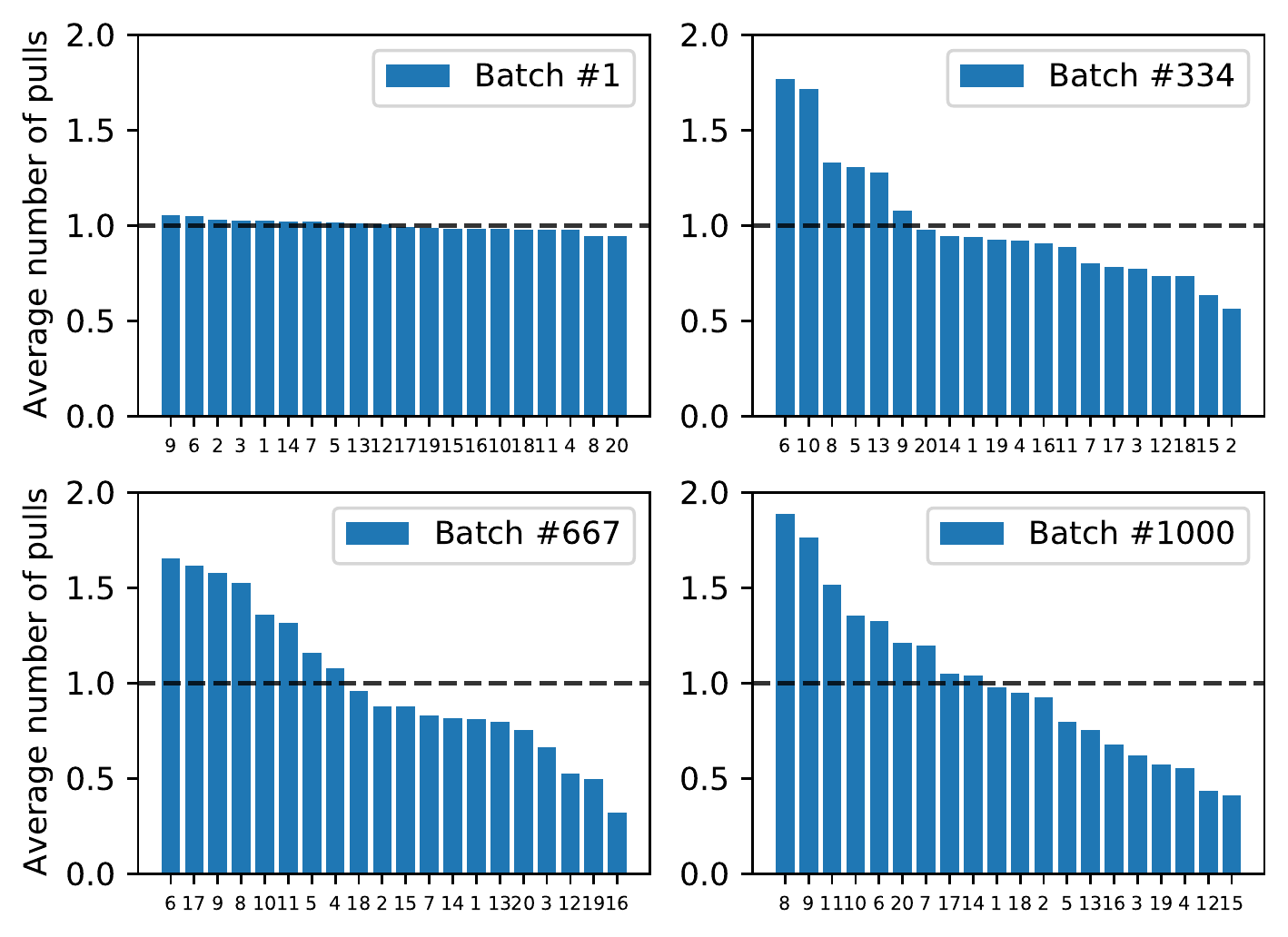}
        \caption{
                Average number of pulls that the parameterized \textsc{TS} conducts on each arm in the $k^\text{th}$ training batch during the course of policy gradient optimization with reward metric $M^\text{mean}$ and baseline $B^\text{self}$, where $k \in \{1, 334, 667, 1000\}$.
                Each training batch contains 1,000 independent instances, and the horizontal axes show the arm indices rearranged in the order of the average number of pulls.
                }
        \label{fig:numeric-gauss-many-count}
\end{figure}



\bibliography{references}

\newpage

\appendix

\newpage

\end{document}